\documentclass{article}
\usepackage{arxiv}

\usepackage{amsmath,amssymb,amsthm,mathtools}
\usepackage{booktabs}
\usepackage{microtype}
\usepackage{multirow}
\usepackage{enumitem}
\usepackage{algorithm}
\usepackage[noend]{algpseudocode}
\usepackage{hyperref}
\usepackage{url}
\usepackage{xcolor}
\hypersetup{hidelinks}

\newcommand{\bX}{\mathbf{X}}
\newcommand{\bA}{\mathbf{A}}
\newcommand{\bB}{\mathbf{B}}
\newcommand{\bD}{\mathbf{D}}
\newcommand{\bG}{\mathbf{G}}
\newcommand{\bM}{\mathbf{M}}

\newcommand{\bDelta}{\boldsymbol{\Delta}}

\newcommand{\be}{\mathbf{e}}
\newcommand{\bs}{\mathbf{s}}

\newcommand{\bzero}{\mathbf{0}}

\newcommand{\clip}{\operatorname{clip}}

\newcommand{\innerF}[2]{\left\langle #1,#2\right\rangle_{F}}

\newcommand{\normF}[1]{\left\lVert #1\right\rVert_{F}}

\usepackage{graphicx} 
\usepackage{subcaption}
\usepackage{multirow}
\definecolor{effectbg}{RGB}{242,244,247}
\definecolor{benchmarkbg}{RGB}{214,226,241}
\definecolor{baselinebg}{RGB}{245,245,245}
\definecolor{accentred}{RGB}{166,57,48}
\algnewcommand{\CenteredState}[1]{\Statex \hspace{-\algorithmicindent} \hfill #1\hfill\null}

\usepackage{array}
\usepackage[table]{xcolor}
\newtheorem{theorem}{Theorem}
\newtheorem{proposition}[theorem]{Proposition}
\newtheorem{lemma}[theorem]{Lemma}

\theoremstyle{remark}
\newtheorem{remark}{Remark}
\theoremstyle{example}

\crefname{assumption}{assumption}{assumptions}
\Crefname{assumption}{Assumption}{Assumptions}

\definecolor{myblue}{rgb}{0.21,0.49,0.74}

\hypersetup{
    colorlinks=true,
    citecolor=myblue,
    linkcolor=myblue,
    urlcolor=black
}

\title{Scheduling Recursive Reasoning in \\Looped Transformers}

\author{Boyuan Wang\textsuperscript{$\diamond, \ast$}, Chengyao Yu\textsuperscript{$\diamond, \ast$}, Jiaxi Ren\textsuperscript{$\diamond, \ast$},\\ Hongxin Wei\textsuperscript{$\diamond$}, Bingyi Jing\textsuperscript{$\circ, \scriptstyle\triangle$}, Yuxin Tao\textsuperscript{$\diamond, \dagger$}}

\begin{document}
\maketitle

\begin{center}
    \vspace{-2mm}
    \textsuperscript{$\diamond$} Southern University of Science and Technology
    \\
    \textsuperscript{$\circ$} The Chinese University of Hong Kong, Shenzhen
    \\
    \textsuperscript{$\scriptstyle\triangle$} Shenzhen Loop Area Institute
\end{center}

\begin{abstract}

Recurrent reasoning models have attracted growing attention for scaling test-time computation, typically by iteratively refining latent states with shared parameters.
However, these models apply each learned update with a fixed unit scale, which can be conservative when updates make persistent progress and overly aggressive when they fluctuate, limiting the benefit of additional loops.
To understand how the scale should vary along the trajectory, we first analyze the sensitivity of terminal loss to recurrent update scale. 
We show that its temporal average admits an exact decomposition into \textit{persistent-progress} and \textit{centered-fluctuation} contributions.
Based on this, we introduce the \textbf{Trajectory Adaptive Progress--Fluctuation Scheduler (TAPS)}, which tracks their balance across recurrent updates and adapts the step size online.
Theoretically, we establish sufficient conditions under which TAPS reduces expected terminal loss and reaches a target quality in fewer recurrent loops.
Empirically, we show that TAPS improves terminal accuracy across structured reasoning tasks without retraining.
By further incorporating the progress--fluctuation principle into training, TAPS yields additional accuracy gains with up to \(1.56\times\) wall-clock speedup at matched baseline accuracy.
The broad applicability of TAPS is supported by its effectiveness across diverse recurrent architectures and inference strategies.
Together, these results establish update scale as complementary control axis of recurrent inference alongside architecture and depth.

\end{abstract}

\begingroup
\renewcommand{\thefootnote}{\fnsymbol{footnote}}

\footnotetext[1]{Equal contribution.}
\footnotetext[2]{Correspondence to
\href{mailto:taoyx@sustech.edu.cn}{taoyx@sustech.edu.cn}.}

\endgroup

\begin{figure}[h]
    \centering
    \includegraphics[width=0.95\linewidth]{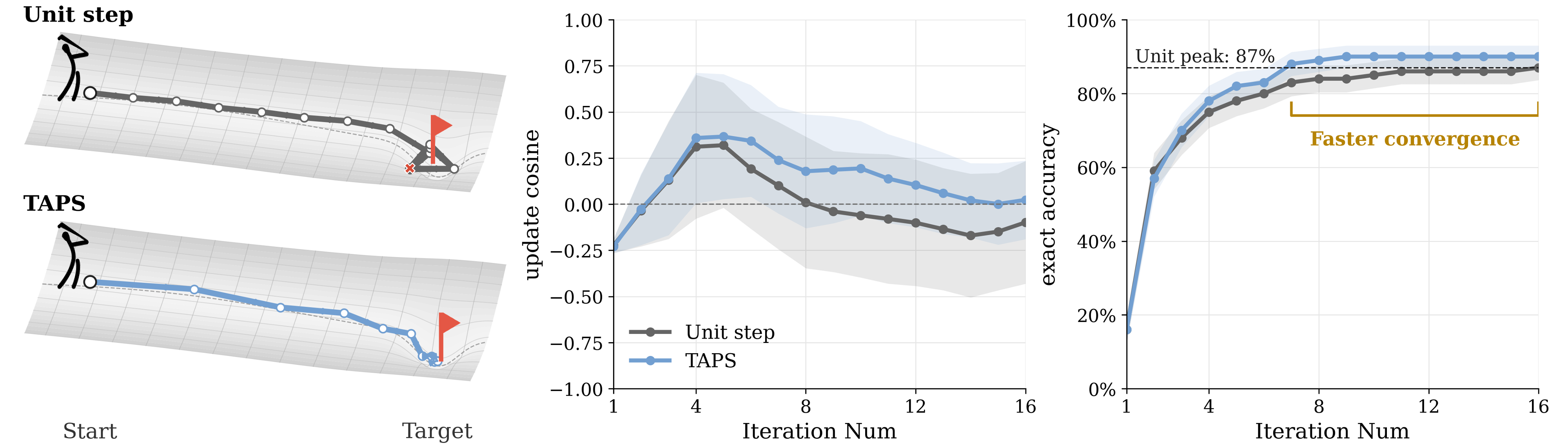}
    \caption{
    \textbf{Overview of TAPS.} 
    (a) Adaptive step sizes reshape recurrent trajectories by modulating update magnitude at inference time.
    (b) TAPS maintains more consistent update directions than unit-step by balancing progress and fluctuation across iterations.
    (c) This improved trajectory accelerates recurrent convergence, reaching comparable performance with fewer iterations.
    }
    \label{fig:motivation}
\end{figure}

\section{Introduction}

Recurrent reasoning models scale test-time computation by repeatedly applying the same transformation to an evolving latent state \citep{dehghani2018universal,yang2024looped,ouro,geiping2025scalingtesttimecomputelatent}.
By reusing parameters across iterations, recurrence decouples effective reasoning depth from parameter count.
This principle has been explored through layer recurrence
\citep{gao2026loop,nguyen2025intra}, MoE recurrence
\citep{chen2026loopmoe,lee2026sparse}, and full-model recurrence
\citep{TRM,saunshi2025reasoning}.
Existing work has mainly focused on what computation is repeated and how many times it is applied, while the scale of each recurrent update has largely remained unexplored.
We study update scale as a complementary control axis alongside recurrent architecture and depth.

This perspective raises a natural question: how far should the state move at
each recurrent step?
Standard recurrent inference typically applies unit updates throughout the trajectory, although different stages may favor different scales. 
A unit step can be conservative when updates are persistent and aggressive when they fluctuate.
Ideally, each scale would be chosen according to its effect on the terminal task loss, but this signal is unavailable at inference time.
This leaves the evolving recurrent trajectory as a natural source of test-time information.
We therefore ask whether this trajectory contains enough information to adapt the update scale and improve task performance.

In this paper, we introduce \textbf{Trajectory Adaptive Progress--Fluctuation Scheduler (TAPS)}, which adapts recurrent update scale online using only observed trajectory.
We analyze the sensitivity of the terminal loss to recurrent update scale and show that its temporal average can be decomposed as
$
    \bar{A}_k^w=P_k^w-S_k^w,
$
where \(P_k^w\) captures \textit{persistent progress} along the recurrent trajectory and \(S_k^w\) captures \textit{centered fluctuations}.
This decomposition suggests a simple principle: take larger steps when updates make consistent progress and smaller steps when fluctuations dominate.
TAPS operationalizes this principle by quantifying persistence and fluctuation across recurrent updates and adapting \(\eta_k\) according to their balance.
Our design is inspired by the broader use of trajectory information for step-size adaptation in optimization \citep{Barzilai1988TwoPointSS,malitsky2019adaptive,vladarean2021first}.

The remaining issue is whether the trajectory signal is aligned with task performance and translates into practical
inference gains.
Our analysis gives sufficient conditions under which the persistence-fluctuation balance guides scale adjustment to reduce terminal loss.
We further show how the resulting local gains can reduce the recurrent computation required to reach a target performance.
Empirically, TAPS improves terminal accuracy on Sudoku and Maze without retraining (\Cref{tab:inner-loop-main}).
Incorporating the same progress--fluctuation principle during training further strengthens these gains, improving accuracy and achieving up to \(1.56\times\) wall-clock speedup at matched performance.
\textbf{Beyond these primary settings, we demonstrate the broader applicability of TAPS across recurrent architectures and inference strategies.}
Across inference strategies, TAPS combines with adaptive exit, hierarchical recurrence, fixed-point inference, and parallel recurrent computation
(\Cref{fig:difficulty-adaptive-compute} and
\hyperref[tab:inner-outer-codesign]{Tables~\ref*{tab:inner-outer-codesign}},
\ref{tab:fixed-point-efficiency},
\ref{tab:parallel-sampling}).
Across architectures, it transfers to latent-state recurrence, recurrent language models, and intermediate-layer recurrence
(\Cref{tab:inner-loop-main,tab:ouro-huginn-task-specific,tab:intermediate-loop}).

Together, these results support a broader view of recurrent inference: architecture determines what computation is repeated, depth determines how long it is repeated, and update scale determines how strongly each step is applied.
TAPS makes the third dimension adaptive to the evolving recurrent trajectory, providing a complementary mechanism for controlling recurrent computation.

\section{Problem Setup}\label{sec:setup}
We consider data $(u,y) \sim P$, where $u\in\mathcal{U}$ is the model input and $y\in\mathcal{Y}$ is the corresponding target. A pretrained looped model with frozen parameters $\theta$ processes $u$ by repeatedly refining a latent representation in $\mathcal{H}:=\mathbb{R}^{n\times d}$. That is, the encoder $E_{\theta}: \mathcal{U} \to \mathcal{H}$ produces the initial state $\bX_0:=E_{\theta}(u)$, and the same recurrent core $\Phi_{\theta}: \mathcal{H} \times \mathcal{U} \to \mathcal{H}$ is applied at each loop:
\begin{equation*}
\bX_{k+1}=\Phi_{\theta}(\bX_k;u),\quad k=0,1,\dots.
\end{equation*}
The learned update at a latent state $\bX\in \mathcal{H}$ is defined by $    \bDelta_\theta(\bX;u):=\Phi_{\theta}(\bX;u)-\bX$.
Therefore, the standard loop can equivalently be written as 
\begin{equation*}
    \bX_{k+1} =\bX_k + \bDelta_\theta(\bX_k;u),
\end{equation*}
which applies the learned update with unit scale.
In Appendix~\ref{toy:nonconstant}, a motivating example shows that such a constant update may not be satisfactory.
In this paper, we expose such implicit unit-scale choice and study whether the update scale can be adaptively controlled across loops to improve inference efficiency without sacrificing task performance.

Specifically, for a fixed horizon $K\geq1$, we replace the unit update by a vector $\boldsymbol{\eta}:=(\eta_0,\ldots,\eta_{K-1})\in[\eta_{\min},\eta_{\max}]^K$ with $0<\eta_{\min}\leq 1\leq\eta_{\max}$, which generates the scheduled trajectory
\begin{equation*}
\bX_0^{\boldsymbol{\eta}}:= \bX_0,\quad \bX_{k+1}^{\boldsymbol{\eta}}:=
\bX_k^{\boldsymbol{\eta}}+\eta_k\bDelta_{\theta}(\bX_k^{\boldsymbol{\eta}};u), \quad k=0,\dots, K-1.
\end{equation*}
Here, $\eta_k$ is the relaxation factor at the $k$-th loop: $\eta_k=1$ recovers the standard loop, while $\eta_k >1$ amplifies the learned update and $\eta_k<1$ damps it.
To measure the quality of a trajectory, let $R_{\theta}:\mathcal{H}\to\mathbb{R}^{p}$ denote the frozen readout that maps a latent state to the model output, and let $\ell:\mathbb{R}^{p}\times\mathcal{Y}\to\mathbb{R}_{+}$ denote the loss function. Then the task loss associated with latent state $\bX$ is
$ L(\bX;y):=\ell(R_{\theta}(\bX),y)$.
Since the target $y$ is unavailable at inference time, we therefore consider an adaptive scheduling policy $\pi$ that selects $\eta_k$ using only the input $u$ and information available up to loop $k$, without access to $y$ or future states.
Let $\bX_K^{\pi}(u)$ denote the final latent state after $K$ loops. The corresponding population risk is defined as
\begin{equation*}
J_K(\pi):=\mathbb{E}_{(u,y)\sim P}\left[L\left(\bX_K^{\pi}(u);y\right)\right].
\end{equation*}
For a target tolerance level $\epsilon>0$, define
\begin{equation*}
K_{\varepsilon}(\pi):=\inf\left\{K\geq 1:J_K(\pi)\leq\varepsilon \right\},
\end{equation*}
which represents the number of loops required by policy $\pi$ to attain the target performance.
Let $\pi_{\mathrm{0}}$ be the standard policy with $\eta_k\equiv 1$.
Our goal is to develop an adaptive scheduling policy $\pi$ such that $K_{\varepsilon}(\pi)<K_{\varepsilon}(\pi_0)$, thereby reaching the target performance with fewer loops than standard inference.

\paragraph{Notations.}
The Frobenius inner product of $\bA$, $\bB\in \mathcal{H}$ is denoted by $\innerF{\bA}{\bB}=\text{tr}(\bA^{\top}\bB)$. The norm of $\bA$ is $\normF{\bA}=\sqrt{\innerF{\bA}{\bA}}$.
For a trajectory $\{\bX_k^{\boldsymbol{\eta}}\}$, write $\bDelta_k^{\boldsymbol{\eta}}
:=\bDelta_\theta(\bX_k^{\boldsymbol{\eta}};u)$. When the $\boldsymbol{\eta}$ is clear from context, we write $\bX_k$ and $\bDelta_k$ for simplicity.
For a scalar-valued function, $\nabla_{\bX}$ denotes its gradient with respect to $\bX$ under the Frobenius inner product.

\section{The Impact of Scheduling: A decomposition of the Gradient}

\label{sec:motivation}

Before introducing TAPS, we first explore the impact of scheduling on model performance.
We show that the temporally averaged gradient of the terminal task loss with respect to \(\eta\) can be decomposed exactly into two parts, namely the \textit{persistent progress} and \textit{centered fluctuations}.

Specifically, consider a fixed example $(u,y)$, a horizon $K>0$, and a schedule $\boldsymbol{\eta}$.
Write $F(\bX,\eta):=\bX+\eta\bDelta_\theta(\bX;u)$.
We use $D_{\bX}$ to denote differentiation with respect to $\bX$. At loop $j$, we define the state Jacobian by
\begin{equation}\label{eq:state-jacobian}
 D_j :=D_{\bX}F(\bX_j,\eta_j)
 =I+\eta_jD_{\bX}\bDelta_\theta(\bX_j;u),
\end{equation}
where $D_j:\mathcal H\to\mathcal H$ describes the first-order propagation
of a perturbation in $\bX_j$ to $\bX_{j+1}$.
Let $D_j^*$ denote the adjoint of $D_j$, characterized by $\innerF{D_j\mathbf{H}}{\mathbf{Z}}=\innerF{\mathbf{H}}{D_j^* \mathbf{Z}}$, $\mathbf{H},\mathbf{Z}\in\mathcal{H}$.
Set $\bG_K:=\nabla_{\bX}L(\bX_K;y)$ and propagate it backward via $\bG_j:=D_j^*\bG_{j+1}$, $j=K-1,\dots,0$.
The following proposition characterizes how each relaxation factor $\eta_k$ affects the task loss at a fixed horizon $K$.
 
\begin{proposition}\label{prop:exact-sensitivity}
Suppose $\bDelta_{\theta}(\cdot;u)$ and $L(\cdot;y)$ are continuously differentiable along the realized trajectory.  Then, for every $k<K$, we have
\begin{equation*}
    \frac{\partial L(\bX_K;y)}{\partial\eta_k}
    =\innerF{\bG_{k+1}}{\bDelta_k}.
\end{equation*}
\end{proposition}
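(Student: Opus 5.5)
The plan is a direct chain-rule / backpropagation argument along the fixed trajectory. Fix $(u,y)$, the horizon $K$, and the schedule $\boldsymbol{\eta}$. Fix an index $k<K$ and view all other factors $\eta_j$, $j\neq k$, as constants. The states $\bX_0,\dots,\bX_k$ do not depend on $\eta_k$: $\bX_0=E_\theta(u)$ is independent of the schedule, and each $\bX_{j+1}=F(\bX_j,\eta_j)$ involves only $\eta_0,\dots,\eta_j$. Hence $\eta_k$ first enters at $\bX_{k+1}=F(\bX_k,\eta_k)$. Since $F$ is affine in $\eta$, this gives $\partial\bX_{k+1}/\partial\eta_k=\bDelta_\theta(\bX_k;u)=\bDelta_k$, with $\bX_k$ held fixed.

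Next I propagate this perturbation forward. For $j\ge k+1$ we have $\bX_{j+1}=F(\bX_j,\eta_j)$ with $\eta_j$ independent of $\eta_k$. The assumed continuous differentiability of $\bDelta_\theta(\cdot;u)$ makes $F(\cdot,\eta_j)$ $C^1$ near $\bX_j$. By the chain rule,
\begin{equation*}
\frac{\partial\bX_{j+1}}{\partial\eta_k}=D_j\,\frac{\partial\bX_j}{\partial\eta_k},
\end{equation*}
with $D_j$ as in \eqref{eq:state-jacobian}. Inducting from $j=k+1$ to $K-1$ yields
\begin{equation*}
\frac{\partial\bX_K}{\partial\eta_k}=D_{K-1}\cdots D_{k+1}\bDelta_k,
\end{equation*}
where the empty product (when $k=K-1$) is the identity.

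Finally, I use that $L(\cdot;y)$ is $C^1$ at $\bX_K$ with Frobenius gradient $\bG_K$. By the chain rule,
\begin{equation*}
\partial L(\bX_K;y)/\partial\eta_k=\innerF{\bG_K}{D_{K-1}\cdots D_{k+1}\bDelta_k}.
\end{equation*}
Moving each $D_j$ across the inner product with its adjoint, one at a time starting from $D_{K-1}$, gives $\innerF{D_{k+1}^*\cdots D_{K-1}^*\bG_K}{\bDelta_k}$. The backward recursion $\bG_j=D_j^*\bG_{j+1}$ identifies the first argument as $\bG_{k+1}$, which gives the claim.

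The only delicate point is justifying differentiability of the composite map $\eta_k\mapsto L(\bX_K;y)$. The hypothesis is stated only \emph{along the realized trajectory}, so I would read it as $C^1$ on a neighborhood of each $\bX_j$. Each map in the composition $\eta_k\mapsto\bX_{k+1}\mapsto\cdots\mapsto\bX_K\mapsto L$ is then $C^1$ near the realized point, so the composition is $C^1$ there and the multivariable chain rule applies. The remaining steps are routine bookkeeping of the adjoint identity.
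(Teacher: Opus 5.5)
Your proposal is correct and follows the paper's proof essentially step for step: the paper also computes $\partial\bX_{k+1}/\partial\eta_k=\bDelta_k$, propagates it through the Jacobians $D_j$ to obtain $D_{K-1}\cdots D_{k+1}\bDelta_k$, and then moves each Jacobian across the Frobenius inner product via its adjoint to arrive at $\bG_{k+1}$. Your reading of the differentiability hypothesis as holding on neighborhoods of the trajectory points is a harmless clarification that the paper leaves implicit; differentiability at each realized point would already suffice for the chain rule.
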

 
Let $A_k:=-\innerF{\bG_{k+1}}{\bDelta_k}$. By Proposition~\ref{prop:exact-sensitivity}, to minimize the final loss, $A_k>0$ favors increasing $\eta_k$ while $A_k<0$ favors decreasing it.
However, $A_k$ depends on the target $y$ and the downstream gradient $\bG_{k+1}$, and is therefore unavailable for designing $\eta_k$.

We next consider the aggregation of $A_k$ over a temporal window, which connects this task-relevant quantity to temporal trajectory information and also provides insight into constructing an estimator of $A_k$.
For any weights $w_{k,j}\ge 0$ with $\sum_{j=0}^k w_{k,j}=1$, we define $\bar A_k^w:=\sum_{j=0}^k w_{k,j}A_j$.
Let $\bar{\bG}_k^w:=\sum_{j=0}^k w_{k,j}\bG_{j+1}$ and $\bar{\bDelta}_k^w:=\sum_{j=0}^k w_{k,j}\bDelta_j$. The persistent and fluctuation contributions are defined as
\begin{equation*}
    P_k^w:=-\innerF{\bar{\bG}_k^w}{\bar{\bDelta}_k^w},
    \quad
    S_k^w:=\sum_{j=0}^k w_{k,j}\innerF{\bG_{j+1}-\bar{\bG}_k^w}{\bDelta_j-\bar{\bDelta}_k^w}.
\end{equation*}
The following proposition provides an exact decomposition of $\bar A_k^{\,w}$.

\begin{proposition}\label{prop:temporal-decomposition}
For every normalized nonnegative weight $w$, it holds that $ \bar A_k^{\,w}
    =P_k^{\,w}-S_k^{\,w}$.
\end{proposition}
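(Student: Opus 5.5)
This is the weighted covariance identity $\mathbb{E}[\langle G,\Delta\rangle]=\langle \mathbb{E}G,\mathbb{E}\Delta\rangle+\operatorname{Cov}(G,\Delta)$, written for the discrete probability measure $w_{k,\cdot}$ on $\{0,\dots,k\}$. The plan is to expand $S_k^w$ by bilinearity of $\innerF{\cdot}{\cdot}$, reduce it to $-\bar A_k^w-P_k^w$, and rearrange. Proposition~\ref{prop:exact-sensitivity} is not needed: the claim is purely algebraic in the vectors $\bG_{j+1}$, $\bDelta_j$ and the weights.

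First, we expand each summand of $S_k^w$:
\begin{equation*}
\innerF{\bG_{j+1}-\bar{\bG}_k^w}{\bDelta_j-\bar{\bDelta}_k^w}
=\innerF{\bG_{j+1}}{\bDelta_j}-\innerF{\bG_{j+1}}{\bar{\bDelta}_k^w}-\innerF{\bar{\bG}_k^w}{\bDelta_j}+\innerF{\bar{\bG}_k^w}{\bar{\bDelta}_k^w}.
\end{equation*}
Next, we multiply by $w_{k,j}$ and sum over $j=0,\dots,k$. Because $\bar{\bG}_k^w$ and $\bar{\bDelta}_k^w$ do not depend on $j$, linearity gives the following:
\begin{itemize}
\item The first term sums to $\sum_j w_{k,j}\innerF{\bG_{j+1}}{\bDelta_j}=-\bar A_k^w$, using $A_j=-\innerF{\bG_{j+1}}{\bDelta_j}$.
\item The second term becomes $-\innerF{\bar{\bG}_k^w}{\bar{\bDelta}_k^w}$.
\item The third term likewise becomes $-\innerF{\bar{\bG}_k^w}{\bar{\bDelta}_k^w}$.
\item The fourth term, using $\sum_j w_{k,j}=1$, becomes $+\innerF{\bar{\bG}_k^w}{\bar{\bDelta}_k^w}$.
\end{itemize}
Collecting these, $S_k^w=-\bar A_k^w-\innerF{\bar{\bG}_k^w}{\bar{\bDelta}_k^w}=-\bar A_k^w+P_k^w$. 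Rearranging gives $\bar A_k^w=P_k^w-S_k^w$.

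The only delicate point is the sign bookkeeping between $A_j$, $P_k^w$ and the positive sign in $S_k^w$. The normalization $\sum_j w_{k,j}=1$ is used exactly once, to collapse the constant fourth term. Nonnegativity of the weights is not needed for the identity itself; it matters only for reading $S_k^w$ as a covariance. No differentiability assumptions enter, since $\bG_{j+1}$ is treated as given.
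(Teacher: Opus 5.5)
Your proof is correct and is essentially the paper's own argument. Like the paper, you expand $S_k^w$ by bilinearity, collapse the two cross terms via the definitions of $\bar{\bG}_k^w$ and $\bar{\bDelta}_k^w$, use $\sum_j w_{k,j}=1$ for the constant term, and rearrange; your side remarks that nonnegativity of the weights and Proposition~\ref{prop:exact-sensitivity} are not needed for the identity are also accurate.
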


The term $P_k^w$ captures the contribution of persistent temporal motion, while $S_k^w$ captures that of centered fluctuations. Thus, a positive $P_k^w$ increases $\bar A_k^w$, whereas a positive $S_k^w$ decreases it.
Their signs are not fixed in general; Section~\ref{sec:method} identifies conditions under which these quantities can be well approximated.
We note that $\bar A_k^w$ has an interpretation in terms of the terminal task loss; see Remark~\ref{interpretation}.

\begin{remark}[Sensitivity Interpretation of $\bar A_k^w$]\label{interpretation}
Consider the perturbed schedule $\boldsymbol{\eta}(t)$ with
$\eta_j(t)=\eta_j+t w_{k,j}$ for $j\le k$ and $\eta_j(t)=\eta_j$ otherwise. By the chain rule, $ -\frac{\mathrm d}{\mathrm dt}L(\bX_K^{\boldsymbol{\eta}(t)};y)|_{t=0}=\bar A_k^w$. Thus, $\bar A_k^w$ gives the first-order terminal-loss advantage of this weighted relaxation perturbation.
\end{remark}

\section{TAPS: Trajectory-Adaptive Progress--Fluctuation Scheduling}
\label{sec:method}

This section introduces TAPS. Section~\ref{method} first shows the construction of the pseudo estimates of $P_k$ and $S_k$ based on the temporal statistics of the learned latent updates, and then introduces the scheduling policy $\pi$.
Section~\ref{theory} establishes conditions under which TAPS identifies the task-improving direction and can lead to better performance.

\subsection{Methodology}\label{method}
At loop $k$, TAPS summarizes updates $\{\bDelta_j\}_{j\le k}$ using exponential moving averages (EMA). Let $\beta\in[0,1)$ control the temporal memory. Starting from $\bM_{-1}:=\bzero\in\mathbb{R}^{n\times d}$ and $v_{-1}=0$, define
\begin{equation*}
    \bM_k=\frac{\beta-\beta^{k+1}}{1-\beta^{k+1}}\bM_{k-1}+\frac{(1-\beta)}{1-\beta^{k+1}}\bDelta_k, 
    \quad 
    v_k=\frac{\beta-\beta^{k+1}}{1-\beta^{k+1}} v_{k-1}+\frac{1-\beta}{nd(1-\beta^{k+1})}\normF{\bDelta_k}^{2}.
\end{equation*}
The observable persistent and fluctuation energies are then defined by
\begin{equation*}\label{eq:proxy-ps}
    \widehat{P}_k:=\frac{1}{nd}\normF{\bM_k}^{2},
    \quad
    \widehat{S}_k:= v_k-\widehat{P}_k,
\end{equation*}
To see their temporal explanation, define the normalized EMA weights by $w_{k,j}:=(1-\beta)\beta^{k-j}/(1-\beta^{k+1})$ with the convention of $0^0=1$.
By Lemma \ref{lem:ema-decomposition} in Appendix~\ref{app:sup_theory}, we have
\begin{equation*}
\bM_k=\sum_{j=0}^{k}w_{k,j}\bDelta_j,
 \quad 
\widehat{P}_k =\frac{1}{nd} \normF{\sum_{j=0}^{k}w_{k,j}\bDelta_j}^{2},
    \quad
\widehat{S}_k =\frac{1}{nd}\sum_{j=0}^{k}w_{k,j} \normF{\bDelta_j-\bM_k}^{2}.
\end{equation*}
Thus, $\widehat{P}_k$ measures persistent motion in the learned updates, whereas
$\widehat{S}_k$ represents their temporal fluctuation.
TAPS then combines the two energies by
\begin{equation}\label{eq:balance-score}
    \widehat{B}_k:=\frac{\widehat{P}_k-\gamma \widehat{S}_k}{\widehat{P}_k+\gamma\widehat{S}_k+\epsilon_{\rm num}}
    \in[-1,1],
\end{equation}
where $\gamma>0$ controls the relative weight of fluctuation and $\epsilon_{\rm num}>0$ prevents numerical instability when both energies are small.

Let $\rho>0$ control the adaptation magnitude and let $K_{\rm warm}\ge0$
denote the number of warmup loops. During warmup, we set $\eta_k=1$ while
continuing to update the temporal statistics. Set
\begin{equation}\label{eq:controller}
    \eta_k =\clip\!\left( 1+\rho\widehat{B}_k,\eta_{\min},\eta_{\max}\right),\quad k\geq K_{\text{warm}},
\end{equation}
where $\clip(x,l,u):=\min\{u,\max\{l,x\}\}$. Therefore, the persistent-dominated motion produces over-relaxation, while fluctuation-dominated motion produces damping.

\begin{remark}
We term the above instantiation as \textbf{TAPS (Adam)} since it adapts the first- and second-moment tracking used in Adam~\citep{kingma2015iclr-adam} to quantify persistence and fluctuation across recurrent updates. Building on the persistence--fluctuation idea, we also provide other instantiations of TAPS such as \textbf{TAPS (GD)} and \textbf{TAPS (BB)} by varying how $P_k$ and $S_k$ are estimated and how their balance is mapped to the relaxation factor $\eta_k$; see Appendix~\ref{sec:optimizer-pseudocode} for details.
\end{remark}

\subsection{Theoretical Property}\label{theory}
The proposed $\eta_k$ relies on statistics $(\widehat{P}_k,\widehat{S}_k)$, whereas the task-improving direction is determined by the unobservable oracle $A_k$. We now study when $(\widehat{P}_k,\widehat{S}_k)$ provides a reliable proxy for this direction.
Let $\pi$ denote the proposed causal controller and write
$\eta_k:=\eta_k^\pi(u)$. For a fixed terminal horizon $K$, loop $k<K$,
and $t\in[\eta_{\min},\eta_{\max}]$, define $\boldsymbol{\eta}^{(k,t)}
    :=(\eta_0,\ldots,\eta_{k-1},t,1,\ldots,1)$.
For each decision at loop $k$, all oracle quantities in the temporal
window are evaluated on the same reference trajectory
$\boldsymbol{\eta}^{(k,1)}$. In particular, for $j<k$, $A_j$ denotes
the coordinate sensitivity evaluated on this reference trajectory.
Throughout this subsection, we set the EMA weights
$w_{k,j}=(1-\beta)\beta^{k-j}/(1-\beta^{k+1})$.

We first assume that the observable energies are informative about their oracle contributions. Specifically, for each $k<K$, there exist constants $0<a_-\le a_+$ and $0\le b_-\le b_+$ with $b_+>0$, such that almost surely
\begin{equation}\label{eq:conditional-sector-ps}
    a_-\widehat{P}_k \le \mathbb E[P_k^w\mid\widehat{P}_k,\widehat{S}_k] \le a_+\widehat{P}_k, 
    \quad
    b_-\widehat{S}_k \le  \mathbb E[S_k^w\mid\widehat{P}_k,\widehat{S}_k] \le  b_+\widehat{S}_k.
\end{equation}
These bounds only require conditional comparability between the observable
energies and the oracle contributions, rather than point-wise agreement.
Together with Proposition~\ref{prop:temporal-decomposition}, they connect
$(\widehat{P}_k,\widehat{S}_k)$ to the windowed oracle advantage $\bar A_k^w$. To relate $\bar A_k^w$ to $A_k$, we further assume that
\begin{equation}\label{eq:conditional-drift}
    \left|
    \mathbb E[A_k-\bar A_k^w\mid\widehat{P}_k,\widehat{S}_k]
    \right|
    \le \delta_k^{\rm drift}
\end{equation}
for some deterministic $\delta_k^{\rm drift}\ge0$.
Further discussions on technical assumptions \eqref{eq:conditional-sector-ps}--\eqref{eq:conditional-drift} are deferred to Appendix~\ref{app:assumption-discussion}.

\begin{theorem}[Relaxation-direction]
\label{thm:proxy-oracle}
Under \cref{eq:conditional-sector-ps,eq:conditional-drift}, we have
\begin{equation}\label{eq:current-response-bracket}
    a_-\widehat{P}_k-b_+\widehat{S}_k-\delta_k^{\rm drift}
    \le
    \mathbb E[A_k\mid\widehat{P}_k,\widehat{S}_k]
    \le
    a_+\widehat{P}_k-b_-\widehat{S}_k+\delta_k^{\rm drift}.
\end{equation}
If $b_-/a_+ \le \gamma \le b_+/a_-$, then
\begin{equation*}
\begin{aligned}
    a_-\widehat{P}_k-b_+\widehat{S}_k>\delta_k^{\rm drift}
    &\ \Longrightarrow\
    \widehat{B}_k>0,\quad
    \mathbb E[A_k\mid\widehat{P}_k,\widehat{S}_k]>0,\\
    b_-\widehat{S}_k-a_+\widehat{P}_k>\delta_k^{\rm drift}
    &\ \Longrightarrow\
    \widehat{B}_k<0,\quad
    \mathbb E[A_k\mid\widehat{P}_k,\widehat{S}_k]<0.
\end{aligned}
\end{equation*}
\end{theorem}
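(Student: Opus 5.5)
The argument conditions everything on $\mathcal F:=\sigma(\widehat{P}_k,\widehat{S}_k)$, uses linearity of conditional expectation to split $A_k$ into a windowed part and a drift part, and then runs a separate sign argument for $\widehat{B}_k$.

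\emph{Step 1 (the bracket \eqref{eq:current-response-bracket}).} Write $A_k=\bar A_k^w+(A_k-\bar A_k^w)$. By Proposition~\ref{prop:temporal-decomposition}, applied on the reference trajectory $\boldsymbol{\eta}^{(k,1)}$ where all oracle quantities are evaluated, we have $\bar A_k^w=P_k^w-S_k^w$ pointwise. Taking conditional expectations gives
\[
\mathbb E[A_k\mid\mathcal F]=\mathbb E[P_k^w\mid\mathcal F]-\mathbb E[S_k^w\mid\mathcal F]+\mathbb E[A_k-\bar A_k^w\mid\mathcal F].
\]
Bound the first term by the sector condition on $P$ in \eqref{eq:conditional-sector-ps}. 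Bound the second term (with the sign flipped) by the sector condition on $S$. Bound the third term in absolute value by $\delta_k^{\rm drift}$, using \eqref{eq:conditional-drift}. Pairing lower with upper bounds yields both sides of \eqref{eq:current-response-bracket} almost surely.

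\emph{Step 2 (the two implications).} Note that $\widehat{P}_k,\widehat{S}_k\ge0$: $\widehat{P}_k$ is a squared norm, and $\widehat{S}_k$ is a weighted sum of squared deviations by Lemma~\ref{lem:ema-decomposition}.

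For the first implication, assume $a_-\widehat{P}_k-b_+\widehat{S}_k>\delta_k^{\rm drift}\ge0$. The lower side of the bracket immediately gives $\mathbb E[A_k\mid\mathcal F]>0$. For the sign of $\widehat{B}_k$, the denominator in \eqref{eq:balance-score} is strictly positive because $\epsilon_{\rm num}>0$, so it suffices to show $\widehat{P}_k>\gamma\widehat{S}_k$. The hypothesis gives $a_-\widehat{P}_k>b_+\widehat{S}_k$, so $\widehat{P}_k>(b_+/a_-)\widehat{S}_k\ge\gamma\widehat{S}_k$, using $\gamma\le b_+/a_-$ and $\widehat{S}_k\ge0$.

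The second implication is symmetric. From $b_-\widehat{S}_k-a_+\widehat{P}_k>\delta_k^{\rm drift}\ge0$ the upper side of the bracket gives $\mathbb E[A_k\mid\mathcal F]<0$. The hypothesis gives $b_-\widehat{S}_k>a_+\widehat{P}_k$. Since $b_-\le\gamma a_+$ and $\widehat{S}_k\ge0$, we get $\gamma a_+\widehat{S}_k\ge b_-\widehat{S}_k>a_+\widehat{P}_k$. Dividing by $a_+>0$ gives $\gamma\widehat{S}_k>\widehat{P}_k$, hence $\widehat{B}_k<0$.

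\emph{Main obstacle.} The algebra is routine; the care lies in two points. First, Proposition~\ref{prop:temporal-decomposition} must be applied on the common reference trajectory $\boldsymbol{\eta}^{(k,1)}$, so that the $A_j$ inside $\bar A_k^w$ coincide with those in the assumptions. Second, in the second implication the strict inequality has to be transferred through $\gamma$. This requires the nonnegativity of $\widehat{S}_k$, which comes from the EMA variance identity in Lemma~\ref{lem:ema-decomposition}, and the chain $b_-\widehat{S}_k\le\gamma a_+\widehat{S}_k$. Without $\widehat{S}_k\ge0$ the interval condition on $\gamma$ would not orient the inequalities.
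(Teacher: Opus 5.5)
Your proposal is correct and follows essentially the same route as the paper. Both arguments split $\mathbb E[A_k\mid\widehat{P}_k,\widehat{S}_k]$ into $\mathbb E[P_k^w-S_k^w\mid\cdot]$ plus the drift term and bound each piece via \eqref{eq:conditional-sector-ps}--\eqref{eq:conditional-drift}. Both then use $\widehat{P}_k,\widehat{S}_k\ge 0$ from Lemma~\ref{lem:ema-decomposition} and the interval on $\gamma$ to sign the numerator of $\widehat{B}_k$.
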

Thus, whenever $a_-\widehat{P}_k-b_+\widehat{S}_k>\delta_k^{\rm drift}$ or $b_-\widehat{S}_k-a_+\widehat{P}_k>\delta_k^{\rm drift}$, the sign of $\widehat{B}_k$ correctly indicates whether $\eta_k$ should increase or decrease compared with $\eta_k=1$.

We then examine whether the selected value of $\eta_k$ also decreases the conditional expected terminal loss.
For $k\ge K_{\rm warm}$, define the terminal loss of $\boldsymbol{\eta}^{(k,t)}$ by
$\phi_k(t):=L(\bX_K^{\boldsymbol{\eta}^{(k,t)}};y)$.
The following theorem gives a finite one-factor gain bound for the
selected relaxation factor.

\begin{theorem}[One-factor gain bound]\label{thm:finite-step}
Fix $k\geq K_{\text{warm}}$.
Suppose that, for some finite constant $H_k\ge0$, $\phi_k'$ is $H_k$-Lipschitz on the interval between $1$ and $\eta_k$ almost surely. Let
\begin{equation}  \label{eq:loop-gain-lower-bound}
    g_k^{\rm lb}:=[\eta_k-1]_+
      \left(a_-\widehat{P}_k-b_+\widehat{S}_k
            -\delta_k^{\rm drift}\right)+[1-\eta_k]_+
      \left(b_-\widehat{S}_k-a_+\widehat{P}_k
            -\delta_k^{\rm drift}\right)
      -\frac{H_k}{2}(\eta_k-1)^2.
\end{equation}
Under \cref{eq:conditional-sector-ps,eq:conditional-drift}, we have $\mathbb E[\phi_k(\eta_k)-\phi_k(1)\mid\widehat{P}_k,\widehat{S}_k] \leq -g_{k}^{\rm lb}$ almost surely.
Consequently, if $\eta_k>1$ and $ a_-\widehat{P}_k-b_+\widehat{S}_k-\delta_k^{\rm drift}>H_k(\eta_k-1)/2$, then $\mathbb E[\phi_k(\eta_k)-\phi_k(1)\mid\widehat{P}_k,\widehat{S}_k] <0$. The same conclusion holds if $\eta_k<1$ and $b_-\widehat{S}_k-a_+\widehat{P}_k-\delta_k^{\rm drift}>H_k(1-\eta_k)/2$.
\end{theorem}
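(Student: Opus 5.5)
The plan is a first-order Taylor expansion of $\phi_k$ around $t=1$ with a Lipschitz remainder. Then the derivative at $t=1$ is identified with $-A_k$ via Proposition~\ref{prop:exact-sensitivity}, and its conditional expectation is bounded using Theorem~\ref{thm:proxy-oracle}.

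\textbf{Step 1 (the derivative at $t=1$).} The schedule $\boldsymbol{\eta}^{(k,1)}$ is the reference trajectory, and $\phi_k(t)$ varies only the $k$-th coordinate. Proposition~\ref{prop:exact-sensitivity}, applied on this trajectory, therefore gives
\[
\phi_k'(1)=\innerF{\bG_{k+1}}{\bDelta_k}=-A_k ,
\]
where $A_k$ is evaluated on the reference trajectory, matching the convention in the theorem's setup.

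\textbf{Step 2 (descent-lemma bound).} Since $\phi_k'$ is $H_k$-Lipschitz on the segment between $1$ and $\eta_k$, the integral form of Taylor's theorem gives, pointwise,
\[
\phi_k(\eta_k)-\phi_k(1)\le \phi_k'(1)(\eta_k-1)+\tfrac{H_k}{2}(\eta_k-1)^2=-A_k(\eta_k-1)+\tfrac{H_k}{2}(\eta_k-1)^2 .
\]

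\textbf{Step 3 (conditioning).} I take conditional expectation given $(\widehat P_k,\widehat S_k)$. The key observation is that $\eta_k$ is a deterministic function of $(\widehat P_k,\widehat S_k)$ when $k\ge K_{\rm warm}$, by \eqref{eq:balance-score}--\eqref{eq:controller}. Hence $\eta_k-1$ can be pulled out of the conditional expectation, and $H_k$ is a constant. This yields
\[
\mathbb E[\phi_k(\eta_k)-\phi_k(1)\mid\cdot]\le -(\eta_k-1)\,\mathbb E[A_k\mid\cdot]+\tfrac{H_k}{2}(\eta_k-1)^2 .
\]
Conditional integrability of $A_k$ and $\phi_k$ is implicitly assumed; I would state it rather than prove it.

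\textbf{Step 4 (case split).} I split on the sign of $\eta_k-1$.
\begin{itemize}
\item If $\eta_k\ge1$, the lower bound in \eqref{eq:current-response-bracket} gives $-(\eta_k-1)\mathbb E[A_k\mid\cdot]\le-[\eta_k-1]_+(a_-\widehat P_k-b_+\widehat S_k-\delta_k^{\rm drift})$.
\item If $\eta_k<1$, the upper bound gives $-(\eta_k-1)\mathbb E[A_k\mid\cdot]=(1-\eta_k)\mathbb E[A_k\mid\cdot]\le-[1-\eta_k]_+(b_-\widehat S_k-a_+\widehat P_k-\delta_k^{\rm drift})$.
\end{itemize}
In each case the other positive-part term in $g_k^{\rm lb}$ vanishes. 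Combining the two cases gives $\le -g_k^{\rm lb}$.

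\textbf{Step 5 (the consequences).} If $\eta_k>1$, then $g_k^{\rm lb}=(\eta_k-1)\bigl[a_-\widehat P_k-b_+\widehat S_k-\delta_k^{\rm drift}-\tfrac{H_k}{2}(\eta_k-1)\bigr]$, which is strictly positive under the stated condition. The case $\eta_k<1$ is symmetric.

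\textbf{Main obstacle.} The delicate point is Step 3, namely justifying measurability of $\eta_k$ with respect to $(\widehat P_k,\widehat S_k)$ and that conditioning commutes with the almost-sure Taylor bound. Step 1 is the other subtle point: $A_k$ must be the sensitivity at $t=1$ on the reference trajectory, not on the realized one. Both points are addressed by the conventions fixed before Theorem~\ref{thm:proxy-oracle}.
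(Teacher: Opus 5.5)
Your proposal is correct and matches the paper's proof step for step: identify $\phi_k'(1)=-A_k$ on the reference schedule, apply the Lipschitz Taylor bound, pull $\eta_k$ (a function of $(\widehat P_k,\widehat S_k)$) out of the conditional expectation, and invoke the bracket \eqref{eq:current-response-bracket}. Your case split on the sign of $\eta_k-1$ is the paper's decomposition $\eta_k-1=[\eta_k-1]_+-[1-\eta_k]_+$ written out explicitly.
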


We now study when the proposed schedule reaches a prescribed task quality with fewer loops.
For $r=0,\ldots,K$, let $\pi^{[r]}$ denote the hybrid policy that follows $\pi$ given by \eqref{eq:controller} for the first $r$ loops and uses unit relaxation thereafter.
For $K>K_{\rm warm}$, define the cumulative gain lower bound by
$G_K^{\rm lb}:= \sum_{k=K_{\rm warm}}^{K-1} \mathbb E[g_k^{\rm lb}]$.

\begin{theorem}[Cumulative gain and loop speedup]
\label{thm:cumulative-gain}
Suppose the conditions of Theorem~\ref{thm:finite-step} hold for every $k=K_{\rm warm},\ldots,K-1$, each $g_k^{\rm lb}$ is integrable, and $L(\bX_K^{\pi^{[r]}};y)$ is integrable for $r=0,\ldots,K$. Then
\[
    J_K(\pi)
    \le
    J_K(\pi_0)-G_K^{\rm lb}.
\]
Consequently, for any $N>K$ with $J_N(\pi_0)<\infty$, if $G_K^{\rm lb}\ge J_K(\pi_0)-J_N(\pi_0)$, then we have $J_K(\pi)\le J_N(\pi_0)$.
For any $\epsilon>0$, if $ G_K^{\rm lb}\ge J_K(\pi_0)-\varepsilon$, then $ K_\varepsilon(\pi)\le K$.
If, in addition, $K<K_\varepsilon(\pi_0)$, then
$K_\varepsilon(\pi)<K_\varepsilon(\pi_0)$.
\end{theorem}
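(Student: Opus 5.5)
The plan is to telescope over the hybrid policies $\pi^{[r]}$. Write
\[
L(\bX_K^{\pi};y)-L(\bX_K^{\pi_0};y)=\sum_{r=0}^{K-1}\Bigl(L(\bX_K^{\pi^{[r+1]}};y)-L(\bX_K^{\pi^{[r]}};y)\Bigr),
\]
using $\pi^{[0]}=\pi_0$ and $\pi^{[K]}=\pi$. The key observation is that $\pi^{[r+1]}$ and $\pi^{[r]}$ agree on the first $r$ factors $\eta_0,\dots,\eta_{r-1}$. Because $\pi$ is causal, these factors and the states $\bX_0,\dots,\bX_r$ are identical under both policies. The two policies then differ only at loop $r$: $\pi^{[r+1]}$ uses $\eta_r$, while $\pi^{[r]}$ uses $1$, and both use unit relaxation afterwards. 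This is exactly the one-factor family $\boldsymbol{\eta}^{(r,t)}$, so the $r$-th increment equals $\phi_r(\eta_r)-\phi_r(1)$. This matches the convention that oracle quantities for decision $r$ are evaluated on the reference trajectory $\boldsymbol{\eta}^{(r,1)}$. For $r<K_{\rm warm}$ the controller sets $\eta_r=1$, so those increments vanish identically.

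For $r\ge K_{\rm warm}$, I would invoke Theorem~\ref{thm:finite-step}, which gives $\mathbb E[\phi_r(\eta_r)-\phi_r(1)\mid\widehat P_r,\widehat S_r]\le -g_r^{\rm lb}$ almost surely. Integrability of $L(\bX_K^{\pi^{[r]}};y)$ for every $r$ makes each increment integrable, and $g_r^{\rm lb}$ is assumed integrable. Taking total expectation via the tower property therefore gives $\mathbb E[\phi_r(\eta_r)-\phi_r(1)]\le-\mathbb E[g_r^{\rm lb}]$. Summing over $r=K_{\rm warm},\dots,K-1$ and using linearity of expectation, which is justified because every term is integrable, yields $J_K(\pi)-J_K(\pi_0)\le -G_K^{\rm lb}$.

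The consequences follow directly. If $G_K^{\rm lb}\ge J_K(\pi_0)-J_N(\pi_0)$, substituting gives $J_K(\pi)\le J_N(\pi_0)$, and finiteness of $J_N(\pi_0)$ makes the subtraction meaningful. If $G_K^{\rm lb}\ge J_K(\pi_0)-\varepsilon$, then $J_K(\pi)\le\varepsilon$, so $K$ belongs to the set defining $K_\varepsilon(\pi)$ and hence $K_\varepsilon(\pi)\le K$. Adding $K<K_\varepsilon(\pi_0)$ gives the strict inequality.

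I expect the main obstacle to be the identification step: checking that the increment between consecutive hybrids is exactly $\phi_r(\eta_r)-\phi_r(1)$ on the correct reference trajectory. This requires that $\eta_r$, and the statistics $(\widehat P_r,\widehat S_r)$ it is computed from, depend only on states up to loop $r$. Those states are shared by both hybrids, so the conditioning in Theorem~\ref{thm:finite-step} refers to the same random variables in both. I would settle this using the causality of $\pi$ and the definition of $\boldsymbol{\eta}^{(k,t)}$ before doing any summation.
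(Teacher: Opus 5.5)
Your proposal is correct and follows essentially the same route as the paper. Both telescope over the hybrid policies $\pi^{[r]}$, use causality to identify each increment with $\phi_r(\eta_r)-\phi_r(1)$, drop the warmup increments, apply Theorem~\ref{thm:finite-step} with the tower property and integrability before summing, and then read off the matched-quality and loop-count consequences directly from $J_K(\pi)\le J_K(\pi_0)-G_K^{\rm lb}$.
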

Therefore, whenever $G_K^{\rm lb}\ge J_K(\pi_0)-\varepsilon$ for some $K<K_\varepsilon(\pi_0)$, the proposed policy reaches the target tolerance by loop $K$, while the unit-relaxation policy has not, yielding $K_\varepsilon(\pi)<K_\varepsilon(\pi_0)$.

\section{Experiments}\label{sec:experiments}

\subsection{Experiment Setting}
\label{sec:Setting}

\textbf{Models.}
We evaluate our method across three forms of recurrent inference with different recurrence structures.
\textbf{(a) Latent-state recurrence.}
TRM~\citep{TRM} performs iterative refinement directly in latent space.
\textbf{(b) Language-model recurrence.}
Ouro and Huginn-0125~\citep{ouro,geiping2025scalingtesttimecomputelatent} provide recurrent inference within language models.
\textbf{(c) Intermediate-layer recurrence.}
We induce recurrence in Qwen3-4B-Instruct by repeatedly applying selected intermediate layers~\citep{chen2026training,yang2025qwen3}.

\textbf{Inference Strategies.}
We evaluate step-size control across four ways of executing recurrent inference.
\textbf{(a) Unit-step inference.}
The standard baseline uses a fixed recurrent horizon with $\eta=1$.
\textbf{(b) Adaptive-exit inference.}
Adaptive exit dynamically determines the recurrent horizon during inference \citep{FPRM}.
\textbf{(c) Fixed-point inference.}
Fixed-point reasoning instead iterates toward an equilibrium state \citep{FPRM}.
\textbf{(d) Parallel recurrent inference.}
PTRM performs recurrent inference over parallel paths \citep{PTRM}.

\textbf{Evaluation.}
We evaluate recurrent inference along two dimensions: Effectiveness and Efficiency.
\begin{itemize}[leftmargin=2em,labelsep=0.4em]
    \item \textbf{Effectiveness}: Performance under full inference budget, measured by each benchmark's metric.
    \item \textbf{Efficiency}: Wall-clock speedup to reach the pretrained unit-step baseline's terminal performance.
\end{itemize}

Full benchmark descriptions and evaluation protocols are provided in Appendix~\ref{app:experimental_details}.

\subsection{Step-Size Schedule at Inference and Training}
\label{sec:main_result}

\begin{table*}[t]
\centering
\caption{
Accuracy (\%) and inference speed of fixed and adaptive step-size schedules on
Sudoku ($H=32$) and Maze ($H=16$), under inference-only control and
training--inference co-design.
N/R indicates that the unit-step target is not reached within the loop budget.
}
\label{tab:inner-loop-main}

\small
\setlength{\tabcolsep}{6.5pt}
\renewcommand{\arraystretch}{1.05}

\resizebox{\linewidth}{!}{
\begin{tabular}{lcccccccc}
\toprule

\multirow{3}{*}{\textbf{Inference method}}
& \multicolumn{4}{c}{\textbf{Sudoku ($H=32$)}}
& \multicolumn{4}{c}{\textbf{Maze ($H=16$)}} \\

\cmidrule(lr){2-5}
\cmidrule(lr){6-9}

& \multicolumn{2}{c}{\textit{Inference-only}}
& \multicolumn{2}{c}{\textit{Co-designed}}
& \multicolumn{2}{c}{\textit{Inference-only}}
& \multicolumn{2}{c}{\textit{Co-designed}} \\

\cmidrule(lr){2-3}
\cmidrule(lr){4-5}
\cmidrule(lr){6-7}
\cmidrule(lr){8-9}

& \textit{Accuracy}
& \textit{Speedup}
& \textit{Accuracy}
& \textit{Speedup}
& \textit{Accuracy}
& \textit{Speedup}
& \textit{Accuracy}
& \textit{Speedup} \\

\midrule

\rowcolor{benchmarkbg}
\multicolumn{9}{l}{
    \hspace{0.4em}\textit{Fixed schedules}
} \\

Standard ($\eta=1.0$)
& 89.67 & 1.000$\times$
& 90.60 & 1.194$\times$
& 78.80 & 1.000$\times$
& 78.80 & 1.249$\times$ \\

Constant ($\eta=0.8$)
& 89.75 & 0.990$\times$
& 91.25 & 1.275$\times$
& 78.40 & N/R
& 79.40 & 1.226$\times$ \\

Constant ($\eta=1.2$)
& 85.43 & N/R
& 85.28 & N/R
& 77.00 & N/R
& 77.40 & N/R \\

\addlinespace[2pt]
\midrule
\addlinespace[2pt]

\rowcolor{benchmarkbg}
\multicolumn{9}{l}{
    \hspace{0.4em}\textit{Adaptive controllers (ours)}
} \\

\textbf{TAPS (GD)}
& 90.07 & \textbf{1.042$\times$}
& 91.38 & \textbf{1.359$\times$}
& 79.00 & 1.132$\times$
& 79.80 & 1.508$\times$ \\

\textbf{TAPS (P/S-Sign)}
& 89.98 & 1.007$\times$
& 91.23 & 1.302$\times$
& \textbf{79.10} & 1.171$\times$
& \textbf{79.90} & \textbf{1.561$\times$} \\

\textbf{TAPS (Momentum)}
& 90.10 & 1.040$\times$
& \textbf{91.39} & 1.300$\times$
& 78.90 & \textbf{1.509$\times$}
& 79.80 & 1.508$\times$ \\

\textbf{TAPS (RMSProp)}
& 90.10 & 1.037$\times$
& 91.38 & 1.303$\times$
& 79.00 & 1.131$\times$
& 79.70 & 1.507$\times$ \\

\textbf{TAPS (Adam)}
& 90.09 & 1.026$\times$
& 91.30 & 1.290$\times$
& \textbf{79.10} & 1.117$\times$
& 79.80 & 1.488$\times$ \\

\textbf{TAPS (BB)}
& \textbf{90.19} & 1.039$\times$
& 91.27 & 1.263$\times$
& 78.90 & 1.465$\times$
& 79.60 & 1.465$\times$ \\

\bottomrule
\end{tabular}
}
\end{table*}

\textbf{Inference Only.}
We first ask whether TAPS can improve recurrent inference without retraining.
We apply the inference-only controllers described in Appendix~\ref{sec:optimizer-pseudocode} to pretrained models, adapting recurrent step sizes online.
Table~\ref{tab:inner-loop-main} shows that all six variants improve terminal accuracy and reach the unit-step baseline's terminal performance in less wall-clock time on both tasks.
Different variants lead on different metrics, but these gains hold across all six estimators.
Fixed scales, by contrast, fail to improve accuracy and speed consistently across tasks.

\textbf{Training-Inference Co-Design.}
We further shape the recurrent dynamics during training to better support
adaptive inference.
We penalize fluctuation only when it exceeds persistent progress:
\begin{equation}
    \mathcal{L}_{\mathrm{co}}
    =
    \mathcal{L}_{\mathrm{task}}
    +
    \mathcal{L}_{\mathrm{ACT}}
    +
    \mu
    \sum_{k}
    \left[
        \widehat{S}_k-\kappa \widehat{P}_k
    \right]_{+}.
\label{eq:codesign-objective}
\end{equation}
This preserves useful progress while suppressing fluctuation, yielding recurrent dynamics that are more compatible with adaptive step-size control. 
As shown in Table~\ref{tab:inner-loop-main}, co-design further improves terminal accuracy and inference efficiency by favoring persistent progress over fluctuation, enabling more aggressive step-size control: the best controller reaches 91.39\% accuracy at $1.300\times$ speedup on Sudoku and 79.90\% accuracy at $1.561\times$ speedup on Maze, compared with 89.67\% and 78.80\% under unit-step inference.
We further analyze the resulting dynamics in Appendix~\ref{app:training_codesign}.

\subsection{Comparison with Different Inference Strategies}
\label{sec:exp_inference_strategies}

\textbf{Adaptive Exit.}
Different inputs can require different amounts of recurrent computation, making a fixed loop budget inefficient.
FPRM~\citep{FPRM} uses fixed-point convergence to stop each sample.
We introduce a TAPS-based adaptive-exit rule that stops inference once recent applied updates become small and the prediction stabilizes; see Appendix~\ref{app:adaptive_exit}.
We evaluate it on Sudoku-Extreme, using empty-cell count as an established difficulty proxy~\citep{prates2018problem}.
Figure~\ref{fig:difficulty-adaptive-compute} shows that TAPS allocates more
compute to harder instances, with gains emerging mainly at high accuracy after sufficient trajectory history enables reliable exit.
It reaches 91.2\% Exact with 316.5 effective updates, using \(4.7\times\)
less compute than FPRM at comparable accuracy (91.1\%).

\begin{figure}[t]
    \centering
    \includegraphics[width=1\linewidth]{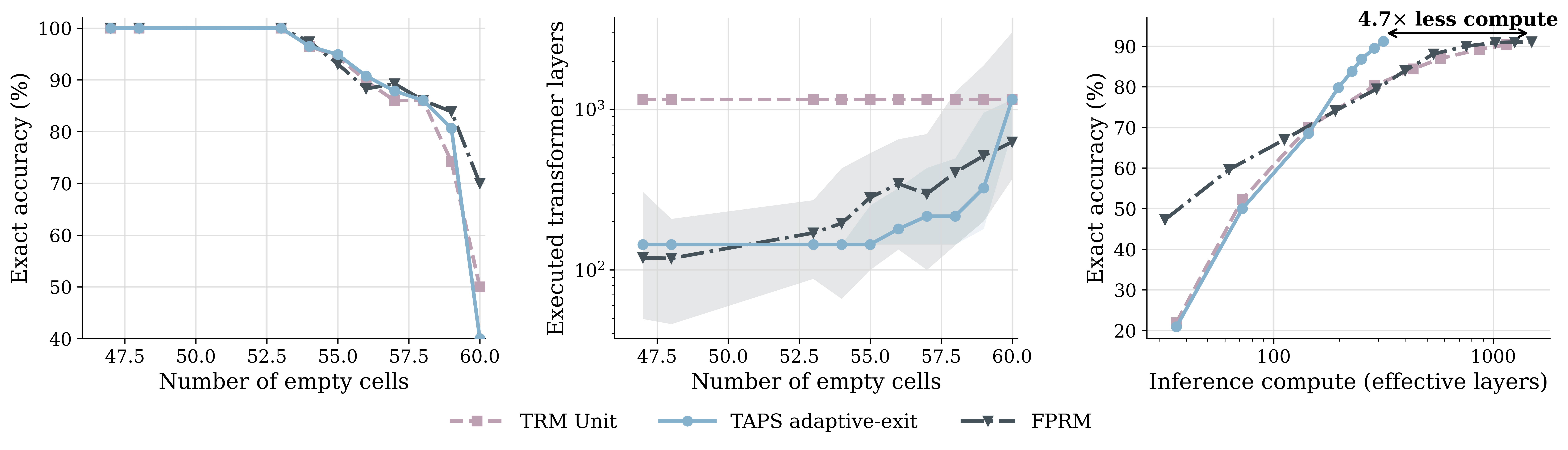}
    \caption{
    \textbf{Difficulty-adaptive inference on Sudoku-Extreme.}
    \textit{Left:} Exact accuracy across puzzle difficulty.
    \textit{Middle:} executed transformer layers, with median and 25th--75th percentiles.
    \textit{Right:} test-time scaling curves showing Exact accuracy as the effective-layer budget increases.
    }
    \label{fig:difficulty-adaptive-compute}
\end{figure}

\begin{figure}
    \centering
    \includegraphics[width=0.7\linewidth]{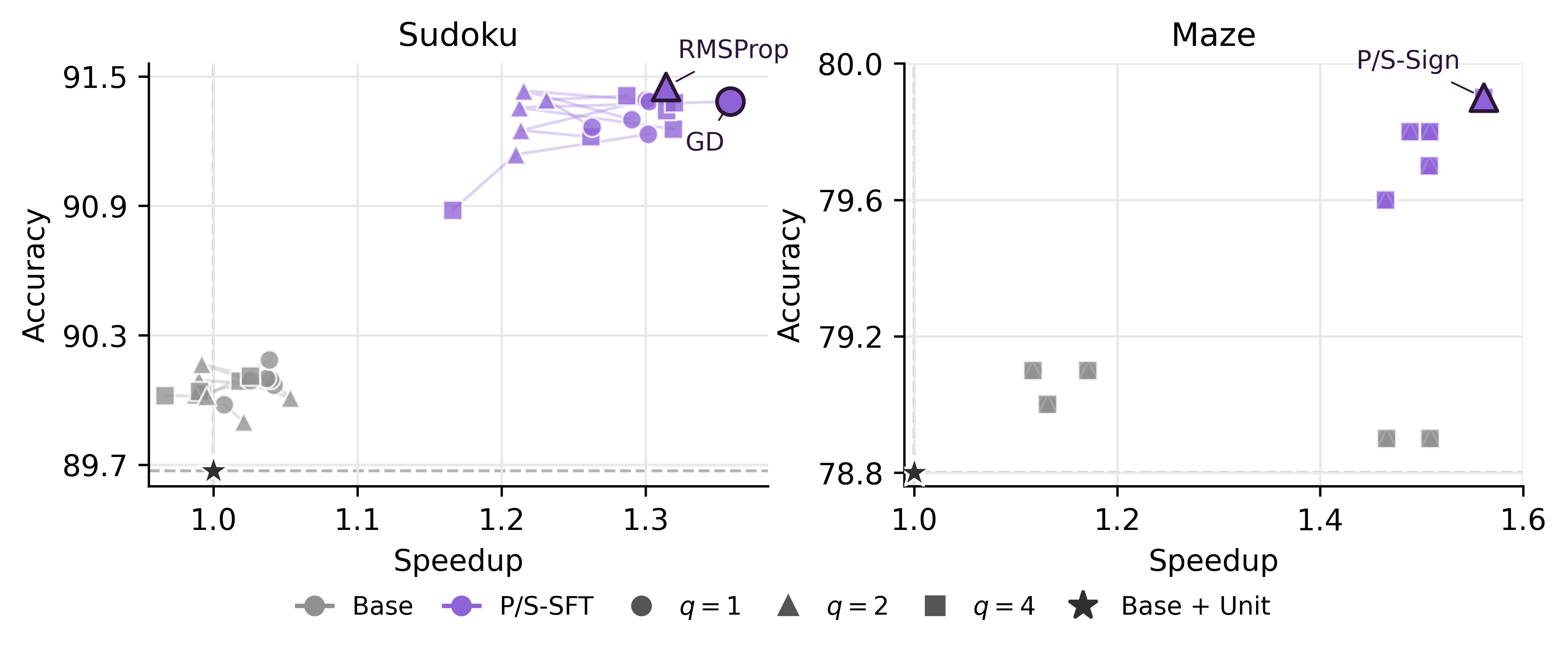}
    \caption{
    \textbf{Controller update frequency.}
    Accuracy--speedup trade-offs at $q\in\{1,2,4\}$ for inference-only (gray) and co-designed (purple) TAPS.
    Co-design improves the trade-off across refresh intervals.
    }
    \label{fig:frequency_pareto}
\end{figure}

\textbf{Hierarchical Recurrent Inference.}
TRM and HRM organize recurrence into nested loops operating at different timescales~\citep{TRM,wang2025hierarchicalreasoningmodel}.
This creates two choices for step-size control: where to rescale updates and how often to refresh the scale.
TAPS operates at either level without altering the nested recurrence.
Table~\ref{tab:inner-outer-codesign} shows gains at both levels, with finer inner-loop control yielding higher terminal accuracy ($\Delta_{\mathrm{I-O}}>0$ for all adaptive controllers, with a larger gap after co-design); the same holds on Maze (Appendix~\ref{app:update_frequency}).
Figure~\ref{fig:frequency_pareto} reveals a non-monotonic frequency trade-off: increasing $q$ reduces controller evaluations but coarsens trajectory tracking, which can require more recurrent updates to reach the same accuracy.

\begin{table*}[t]
\centering
\caption{
Accuracy (\%) and inference speed of inner- and outer-loop step-size scheduling on
Sudoku ($H=32$), before and after P/S co-design SFT.
$\Delta_{\mathrm{I-O}}$ is inner minus outer terminal accuracy, in percentage points;
N/R indicates that the unit-step target is not reached within 32 loops.
}
\label{tab:inner-outer-codesign}

\small
\setlength{\tabcolsep}{5.0pt}
\renewcommand{\arraystretch}{1.08}

\resizebox{\linewidth}{!}{
\begin{tabular}{lcccccccccc}
\toprule

\multirow{3}{*}{\textbf{Inference method}}
& \multicolumn{5}{c}{\textbf{Inference-only}}
& \multicolumn{5}{c}{\textbf{P/S co-design SFT}} \\

\cmidrule(lr){2-6}
\cmidrule(lr){7-11}

& \multicolumn{2}{c}{\textit{Inner}}
& \multicolumn{2}{c}{\textit{Outer}}
& \multirow{2}{*}{\(\boldsymbol{\Delta}_{\mathrm{I-O}}\)}
& \multicolumn{2}{c}{\textit{Inner}}
& \multicolumn{2}{c}{\textit{Outer}}
& \multirow{2}{*}{\(\boldsymbol{\Delta}_{\mathrm{I-O}}\)} \\

\cmidrule(lr){2-3}
\cmidrule(lr){4-5}
\cmidrule(lr){7-8}
\cmidrule(lr){9-10}

& \textit{Accuracy}
& \textit{Speedup}
& \textit{Accuracy}
& \textit{Speedup}
&
& \textit{Accuracy}
& \textit{Speedup}
& \textit{Accuracy}
& \textit{Speedup}
& \\

\midrule

\rowcolor{benchmarkbg}
\multicolumn{11}{l}{
    \hspace{0.4em}\textit{Fixed schedules}
} \\

Standard ($\eta=1.0$)
& 89.67 & 1.000$\times$
& 89.67 & 1.000$\times$
& $0.00$
& 90.60 & 1.194$\times$
& 90.60 & 1.194$\times$
& $0.00$ \\

Constant ($\eta=0.8$)
& 89.75 & 0.990$\times$
& 89.57 & N/R
& $+0.17$
& 91.25 & 1.275$\times$
& 90.89 & \textbf{1.234$\times$}
& $+0.36$ \\

Constant ($\eta=1.2$)
& 85.43 & N/R
& 89.49 & N/R
& $-4.06$
& 85.28 & N/R
& 90.38 & 1.145$\times$
& $-5.09$ \\

\addlinespace[2pt]
\midrule
\addlinespace[2pt]

\rowcolor{benchmarkbg}
\multicolumn{11}{l}{
    \hspace{0.4em}\textit{Adaptive controllers (ours)}
} \\

\textbf{TAPS (GD)}
& 90.07 & \textbf{1.042$\times$}
& 89.68 & 0.997$\times$
& $+0.39$
& 91.38 & \textbf{1.359$\times$}
& \textbf{90.75} & \textbf{1.223$\times$}
& $+0.64$ \\

\textbf{TAPS (P/S-Sign)}
& 89.98 & 1.007$\times$
& \textbf{89.78} & \textbf{0.997$\times$}
& $+0.20$
& 91.23 & 1.302$\times$
& 90.63 & 1.177$\times$
& $+0.60$ \\

\textbf{TAPS (Momentum)}
& 90.10 & 1.040$\times$
& 89.78 & 0.996$\times$
& $+0.32$
& \textbf{91.39} & 1.300$\times$
& 90.74 & 1.220$\times$
& $+0.65$ \\

\textbf{TAPS (RMSProp)}
& 90.10 & 1.037$\times$
& 89.74 & 0.991$\times$
& $+0.36$
& 91.38 & 1.303$\times$
& 90.71 & 1.177$\times$
& $\mathbf{+0.67}$ \\

\textbf{TAPS (Adam)}
& 90.09 & 1.026$\times$
& 89.68 & 0.993$\times$
& $+0.41$
& 91.30 & 1.290$\times$
& 90.70 & 1.223$\times$
& $+0.60$ \\

\textbf{TAPS (BB)}
& \textbf{90.19} & 1.039$\times$
& 89.69 & 0.987$\times$
& $\mathbf{+0.49}$
& 91.27 & 1.263$\times$
& 90.68 & 1.173$\times$
& $+0.58$ \\

\bottomrule
\end{tabular}
}
\end{table*}

\textbf{Parallel recurrent inference.}
PTRM extends recurrent test-time computation from depth scaling to width scaling \citep{PTRM}.
It runs multiple latent rollouts in parallel and uses Gaussian noise to diversify them.
We extend PTRM by applying TAPS independently within each rollout, without modifying its parallel sampling procedure.
Table~\ref{tab:parallel-sampling} shows that combining the two consistently improves performance across parallel-sampling budgets.
We further ablate the noise scale in Appendix~\ref{app:ptrm_noise}, showing that the gain remains robust across different levels of stochasticity.

\textbf{Fixed-point inference.}
Fixed-point reasoning uses convergence of the latent dynamics as an adaptive halting criterion, stopping when $\|\mathbf z_{i+1}-\mathbf z_i\|\leq\epsilon$ \citep{FPRM}.
This criterion enables iterative refinement with additional computation until the latent dynamics stabilize.
In contrast, while fixed-point inference favors diminishing state changes, our controller allows the update scale to increase or decrease according to local progress and fluctuation.
Table~\ref{tab:fixed-point-efficiency} shows that TAPS reaches the higher-accuracy fixed-point targets with substantially fewer recurrent updates; the accuracy that fixed-point inference attains
after 1{,}000 updates is matched within roughly 260.

\subsection{Extensions to Other Model Architectures}
\label{sec:extensions_arch}

\textbf{Language-model recurrence.}
Recurrent language models improve parameter efficiency through repeated computation and have attracted increasing attention for latent reasoning
\citep{geiping2025scalingtesttimecomputelatent, ouro}.
We test whether adaptive update-scale control transfers to this setting by applying TAPS to Ouro-1.4B and Huginn-0125 across four language benchmarks.
As shown in Table~\ref{tab:ouro-huginn-task-specific}, all six TAPS variants improve average accuracy over unit-step inference on both models, although the gains vary across tasks.
In contrast, fixed rescaling is sensitive to the chosen scale and can substantially degrade performance, highlighting the value of adapting the scale along the recurrent trajectory.
The same trend extends to the larger Ouro-2.6B model (Table~\ref{tab:ouro-task-specific}).

\begin{table*}[t]
\centering
\caption{
Accuracy (\%) of fixed and adaptive step-size schedules under language-model recurrence, on Ouro-1.4B and Huginn-0125.
\textit{Avg.} is the mean over the four benchmarks.
}
\label{tab:ouro-huginn-task-specific}

\small
\setlength{\tabcolsep}{5.5pt}
\renewcommand{\arraystretch}{1.05}

\resizebox{\linewidth}{!}{
\begin{tabular}{lcccccccccc}
\toprule

\multirow{2}{*}{\textbf{Inference method}}
& \multicolumn{5}{c}{\textbf{Ouro-1.4B}}
& \multicolumn{5}{c}{\textbf{Huginn-0125}} \\

\cmidrule(lr){2-6}
\cmidrule(lr){7-11}

& \textit{MMLU}
& \textit{ARC-C}
& \textit{HellaSwag}
& \textit{GSM8K}
& \textit{Avg.}
& \textit{MMLU}
& \textit{ARC-C}
& \textit{HellaSwag}
& \textit{GSM8K}
& \textit{Avg.} \\

\midrule

\rowcolor{benchmarkbg}
\multicolumn{11}{l}{
    \hspace{0.4em}\textit{Fixed schedules}
} \\

Standard ($\eta=1.0$)
& 68.23 & 52.73 & 71.65 & 51.78 & 61.10
& 31.45 & 37.80 & \textbf{66.82} & 19.41 & 38.87 \\

Constant ($\eta=0.8$)
& 59.75 & 50.17 & 68.02 & 33.89 & 52.96
& 30.97 & 37.54 & 66.54 & 19.79 & 38.71 \\

Constant ($\eta=1.2$)
& 33.67 & 40.36 & 46.82 & 10.54 & 32.85
& 27.05 & 35.15 & 60.76 & 8.34 & 32.83 \\

\addlinespace[2pt]
\midrule
\addlinespace[2pt]

\rowcolor{benchmarkbg}
\multicolumn{11}{l}{
    \hspace{0.4em}\textit{Adaptive controllers (ours)}
} \\

\textbf{TAPS (GD)}
& 68.21 & \textbf{52.99} & 71.73 & 53.22 & 61.54
& 31.43 & \textbf{37.97} & 66.66 & \textbf{20.32} & 39.10 \\

\textbf{TAPS (P/S-Sign)}
& 68.22 & 52.90 & 71.88 & 52.99 & 61.50
& 31.31 & \textbf{37.97} & 66.70 & 19.94 & 38.98 \\

\textbf{TAPS (Momentum)}
& 68.27 & 52.56 & 71.85 & 56.03 & 62.18
& 31.50 & \textbf{37.97} & \textbf{66.73} & 19.86 & 39.02 \\

\textbf{TAPS (RMSProp)}
& \textbf{68.38} & 52.82 & 71.69 & 53.98 & 61.72
& 31.34 & 37.80 & 66.66 & \textbf{20.32} & 39.03 \\

\textbf{TAPS (Adam)}
& 68.12 & 52.90 & \textbf{71.92} & 55.57 & 62.13
& 31.41 & 37.71 & 66.62 & 20.24 & 39.00 \\

\textbf{TAPS (BB)}
& 67.95 & 52.90 & 71.61 & \textbf{58.98} & \textbf{62.86}
& \textbf{31.70} & \textbf{37.97} & 66.71 & 20.24
& \textbf{39.15} \\

\bottomrule
\end{tabular}
}
\end{table*}

\textbf{Intermediate-layer recurrence.}
Training-free looped transformers provide a representative setting for intermediate-layer recurrence by repeatedly executing a block of hidden layers in an off-the-shelf LLM~\citep{chen2026training}.
We apply TAPS to Qwen3-4B-Instruct~\citep{yang2025qwen3} with all model weights frozen.
As shown in Table~\ref{tab:intermediate-loop}, adaptive scaling can further improve the unit-step Loop, with TAPS (BB) achieving the best average accuracy.
All controllers select \(\bar{\eta}>1\), indicating that these intermediate-layer loops generally favor mild over-relaxation.

\begin{table*}[t]
\centering
\caption{
Accuracy (\%) and mean step-size multiplier $\bar{\eta}$ of intermediate-layer recurrence in Qwen3-4B-Instruct ($K=3$).
$\bar{\eta}$ is the relaxation factor averaged over recurrent forward passes;
\textit{Avg.} is the mean over the three benchmarks.
}
\label{tab:intermediate-loop}

\small
\setlength{\tabcolsep}{5.5pt}
\renewcommand{\arraystretch}{1.05}

\resizebox{\linewidth}{!}{
\begin{tabular}{lcccccccc}
\toprule

\multirow{2}{*}{\textbf{Inference method}}
& \multicolumn{2}{c}{\textbf{CommonsenseQA}}
& \multicolumn{2}{c}{\textbf{GPQA-Main}}
& \multicolumn{2}{c}{\textbf{MMLU-Pro}}
& \multicolumn{2}{c}{\textbf{Avg.}} \\

\cmidrule(lr){2-3}
\cmidrule(lr){4-5}
\cmidrule(lr){6-7}
\cmidrule(lr){8-9}

& \textit{Accuracy} & $\bar{\eta}$
& \textit{Accuracy} & $\bar{\eta}$
& \textit{Accuracy} & $\bar{\eta}$
& \textit{Accuracy} & $\bar{\eta}$ \\

\midrule

\rowcolor{benchmarkbg}
\multicolumn{9}{l}{
    \hspace{0.4em}\textit{Baselines}
} \\

Qwen3-4B-Instruct (Base)
& 78.87 & --
& 34.15 & --
& 57.30 & --
& 56.77 & -- \\

Loop ($\eta=1.0$)
& 80.10 & 1.000
& 36.16 & 1.000
& 61.40 & 1.000
& 59.22 & 1.000 \\

\addlinespace[2pt]
\midrule
\addlinespace[2pt]

\rowcolor{benchmarkbg}
\multicolumn{9}{l}{
    \hspace{0.4em}\textit{Adaptive controllers (ours)}
} \\

\textbf{TAPS (GD)}
& 80.26 & 1.112
& \textbf{36.61} & 1.125
& 61.00 & 1.133
& 59.29 & 1.123 \\

\textbf{TAPS (P/S-Sign)}
& \textbf{80.34} & 1.099
& 36.38 & 1.120
& 60.10 & 1.133
& 58.94 & 1.117 \\

\textbf{TAPS (Momentum)}
& 80.10 & 1.032
& 36.38 & 1.039
& 60.80 & 1.042
& 59.09 & 1.038 \\

\textbf{TAPS (RMSProp)}
& \textbf{80.34} & 1.099
& 36.38 & 1.120
& 60.10 & 1.133
& 58.94 & 1.117 \\

\textbf{TAPS (Adam)}
& 80.26 & 1.082
& 36.16 & 1.118
& 59.70 & 1.131
& 58.71 & 1.110 \\

\textbf{TAPS (BB)}
& \textbf{80.34} & 1.052
& 36.16 & 1.087
& \textbf{61.60} & 1.057
& \textbf{59.37} & 1.065 \\

\bottomrule
\end{tabular}
}
\end{table*}

\section{Related Work}

\textbf{Designing Recurrent Inference.}
The recurrence $\bX_{k+1}=\bX_k+\eta_k\bDelta_\theta(\bX_k;u)$ exposes three distinct design axes: what update to apply ($\bDelta_\theta$), how long to iterate ($K$), and how far to move at each step ($\eta_k$).
Prior work has extensively explored the first two.
Architecture-focused methods design the learned update $\bDelta_\theta$~\citep{liao2016bridging,dehghani2018universal,bai2019deep,shomali2026loopmtp,koishekenov2025encode}, while adaptive-depth methods vary $K$ across inputs or tokens to allocate computation where needed~\citep{graves2016act,banino2021pondernet,bae2026mixture,chen2025inner}.
Convergence-based approaches also determine $K$ online, using the evolving state to decide when to halt~\citep{FPRM}.
Here, we turn to the third axis: how the update scale $\eta_k$ should evolve along the recurrent trajectory.

\textbf{Adaptive Scaling of Iterative Updates.}
The appropriate scale of a recurrent update can vary along the inference trajectory.
Prior work adjusts this scale primarily to improve stability or convergence.
Stability-oriented approaches use smaller fixed steps, either at inference time for frozen models or during training as a function of model depth and recurrent horizon~\citep{chen2026training,wang2026residual}.
Fixed-point methods use residual information to damp updates or construct accelerated directions, aiming to recover or accelerate convergence~\citep{FPRM,anderson1965,walker2011anderson}.
TAPS targets a different objective: task performance under a finite inference budget.
It adapts $\eta_k$ online according to the progress--stability tradeoff while preserving the direction of the learned update.

\section{Conclusion}

In this paper, we identify update scale as a new axis in recurrent inference, and we introduce TAPS, a trajectory-aware scheduler that adapts recurrent step sizes using local dynamics. Our theory characterizes how the balance between progress and instability guides beneficial step-size adjustments. Experiments demonstrate improved and generalizable recurrent inference across architectures and inference strategies without modifying model weights. Several directions remain open for future study. 
(i) Our theory relies on conditions linking trajectory statistics to task contributions, and extending these guarantees to broader recurrent dynamics remains an interesting direction.
(ii) TAPS currently relies on local trajectory statistics as observable proxies, and incorporating richer signals may provide more effective recurrent computation control.
(iii) Our training-inference co-design provides an initial step toward shaping recurrent dynamics for step-size scheduling, leaving deeper integration into training and broader architectures for future work.

\clearpage

\bibliography{references}
\bibliographystyle{iclr2027_conference}

\clearpage
\appendix

\section{Technical Proofs}\label{seca}

\subsection{Proof of Proposition~\ref{prop:exact-sensitivity}}
Consider any fixed $k<K$. Since $\bX_k$ depends only on the preceding factor $(\eta_0,\ldots,\eta_{k-1})$, we have
\begin{equation}\label{eq:proof-initial-sensitivity}
    \frac{\partial\bX_{k+1}}{\partial\eta_k}
    =\bDelta_{\theta}(\bX_k;u)=\bDelta_k.
\end{equation}
For every $j=k+1,\ldots,K-1$, by the chain rule and \cref{eq:state-jacobian}, we have
\begin{equation*}
    \frac{\partial\bX_{j+1}}{\partial\eta_k}
    =D_j\frac{\partial\bX_j}{\partial\eta_k}.
\end{equation*}
Iterating from \cref{eq:proof-initial-sensitivity} yields
\begin{equation*}
    \frac{\partial\bX_K}{\partial\eta_k}
    =D_{K-1}D_{K-2}\cdots D_{k+1}\bDelta_k,
\end{equation*}
where the product is the identity when $k=K-1$.  Therefore, we have
\begin{align*}
    \frac{\partial L(\bX_K;y)}{\partial\eta_k}
    &=\innerF{\nabla_{\bX}L(\bX_K;y)}
              {D_{K-1}\cdots D_{k+1}\bDelta_k}\\
    &=\innerF{D_{k+1}^{*}\cdots D_{K-1}^{*}
              \nabla_{\bX}L(\bX_K;y)}{\bDelta_k}\\
    &=\innerF{\bG_{k+1}}{\bDelta_k},
\end{align*}
where the final equality follows by repeatedly applying the adjoint recursion that
$G_j=D_j^* G_{j+1}$.

\subsection{Proof of Proposition~\ref{prop:temporal-decomposition}}
By the definitions of $A_j$ and $\bar A_k^w$, we have
\begin{equation*}
    \bar A_k^w
    =-\sum_{j=0}^{k}w_{k,j}
      \innerF{\bG_{j+1}}{\bDelta_j}.
\end{equation*}
Expanding the centered cross term gives
\begin{align*}
    S_k^w
    &=
    \sum_{j=0}^{k}w_{k,j}
    \innerF{\bG_{j+1}-\bar{\bG}_k^w}
           {\bDelta_j-\bar{\bDelta}_k^w} \\
    &=
    \sum_{j=0}^{k}w_{k,j}
    \innerF{\bG_{j+1}}{\bDelta_j}
    -\innerF{\sum_{j=0}^{k}w_{k,j}\bG_{j+1}}
             {\bar{\bDelta}_k^w}
    -\innerF{\bar{\bG}_k^w}
             {\sum_{j=0}^{k}w_{k,j}\bDelta_j}\\
    &\quad~+\sum_{j=0}^{k}w_{k,j}
     \innerF{\bar{\bG}_k^w}{\bar{\bDelta}_k^w}.
\end{align*}
Since
\[
    \sum_{j=0}^{k}w_{k,j}=1,\qquad
    \sum_{j=0}^{k}w_{k,j}\bG_{j+1}=\bar{\bG}_k^w,\qquad
    \sum_{j=0}^{k}w_{k,j}\bDelta_j=\bar{\bDelta}_k^w,
\]
we obtain
\[
    S_k^w
    =
    \sum_{j=0}^{k}w_{k,j}
    \innerF{\bG_{j+1}}{\bDelta_j}
    -\innerF{\bar{\bG}_k^w}{\bar{\bDelta}_k^w}.
\]
Therefore,
\begin{align*}
    \bar A_k^w
    &=
    -\innerF{\bar{\bG}_k^w}{\bar{\bDelta}_k^w}
    -S_k^w =P_k^w-S_k^w,
\end{align*}
which completes the proof.

\subsection{Proof of Theorem~\ref{thm:proxy-oracle}}

\begin{proof}
By Proposition~\ref{prop:temporal-decomposition}, we have $\bar A_k^w=P_k^w-S_k^w$.
Taking conditional expectations with respect to
$(\widehat{P}_k,\widehat{S}_k)$ gives
\[
    \mathbb E[\bar A_k^w\mid\widehat{P}_k,\widehat{S}_k]
    =
    \mathbb E[P_k^w\mid\widehat{P}_k,\widehat{S}_k]
    -
    \mathbb E[S_k^w\mid\widehat{P}_k,\widehat{S}_k].
\]
The lower bound in \cref{eq:conditional-sector-ps} for $P_k^w$
and the upper bound for $S_k^w$ therefore imply
\[
    \mathbb E[\bar A_k^w\mid\widehat{P}_k,\widehat{S}_k]
    \ge
    a_-\widehat{P}_k-b_+\widehat{S}_k.
\]
Similarly, using the upper bound for $P_k^w$ and the lower bound for
$S_k^w$ gives
\[
    \mathbb E[\bar A_k^w\mid\widehat{P}_k,\widehat{S}_k]
    \le
    a_+\widehat{P}_k-b_-\widehat{S}_k.
\]
Hence
\begin{equation*}
    a_-\widehat{P}_k-b_+\widehat{S}_k
    \le
    \mathbb E[\bar A_k^w\mid\widehat{P}_k,\widehat{S}_k]
    \le
    a_+\widehat{P}_k-b_-\widehat{S}_k.
\end{equation*}

Note that
\[
    \mathbb E[A_k\mid\widehat{P}_k,\widehat{S}_k]
    =
    \mathbb E[\bar A_k^w\mid\widehat{P}_k,\widehat{S}_k]
    +
    \mathbb E[A_k-\bar A_k^w\mid\widehat{P}_k,\widehat{S}_k].
\]
By \cref{eq:conditional-drift}, we have
\[
    a_-\widehat{P}_k-b_+\widehat{S}_k-\delta_k^{\rm drift}
    \le
    \mathbb E[A_k\mid\widehat{P}_k,\widehat{S}_k]
    \le
    a_+\widehat{P}_k-b_-\widehat{S}_k+\delta_k^{\rm drift},
\]
which proves \cref{eq:current-response-bracket}. It remains to establish the two directional claims. By Lemma~\ref{lem:ema-decomposition}, $\widehat{P}_k,\widehat{S}_k\ge0$. Since $\epsilon_{\rm num}>0$, the denominator of \cref{eq:balance-score} is strictly positive. Thus the sign of $\widehat{B}_k$ is determined by $\widehat{P}_k-\gamma\widehat{S}_k$. Suppose $a_-\widehat{P}_k-b_+\widehat{S}_k>\delta_k^{\rm drift}$. Since $\delta_k^{\rm drift}\ge0$, we have $a_-\widehat{P}_k>b_+\widehat{S}_k$, and hence $ \widehat{P}_k >  b_+\widehat{S}_k/a_-$. The assumed upper bound $\gamma\le b_+/a_-$ then gives $\widehat{P}_k>\gamma\widehat{S}_k$,
so the numerator of \cref{eq:balance-score} is positive and therefore
$\widehat{B}_k>0$. Moreover, the lower bound in
\cref{eq:current-response-bracket} gives
\[
    \mathbb E[A_k\mid\widehat{P}_k,\widehat{S}_k]
    \ge
    a_-\widehat{P}_k-b_+\widehat{S}_k-\delta_k^{\rm drift}
    >0.
\]

Now suppose $ b_-\widehat{S}_k-a_+\widehat{P}_k>\delta_k^{\rm drift}$.
By the same argument, we have $\widehat{P}_k< b_-\widehat{S}_k/a_+$.
Since $\gamma\ge b_-/a_+$, it holds that $  \widehat{P}_k<\gamma\widehat{S}_k$.
Thus the numerator of \cref{eq:balance-score} is negative and
$\widehat{B}_k<0$. Finally, the upper bound in
\cref{eq:current-response-bracket} gives
\[
    \mathbb E[A_k\mid\widehat{P}_k,\widehat{S}_k]
    \le
    a_+\widehat{P}_k-b_-\widehat{S}_k+\delta_k^{\rm drift}
    <0.
\]
This proves both directional claims.
\end{proof}

\subsection{Proof of Theorem~\ref{thm:finite-step}}

\begin{proof}
At $t=1$, $\phi_k$ and $A_k$ are evaluated on the same reference schedule $\boldsymbol{\eta}^{(k,1)}$. Proposition~\ref{prop:exact-sensitivity}
therefore gives $\phi_k'(1)=-A_k$. Since $\eta_k$ is determined by $(\widehat{P}_k,\widehat{S}_k)$ through the controller, it can be taken outside conditional expectations.

By the first-order Taylor bound implied by the $H_k$ Lipschitz continuity of $\phi_k'$, we have almost surely
\[
    \phi_k(\eta_k)-\phi_k(1)
    \le
    (\eta_k-1)\phi_k'(1)
    +\frac{H_k}{2}(\eta_k-1)^2.
\]
Taking conditional expectations gives
\[
    \mathbb E[
        \phi_k(\eta_k)-\phi_k(1)
        \mid\widehat{P}_k,\widehat{S}_k]
    \le
    -(\eta_k-1)
    \mathbb E[A_k\mid\widehat{P}_k,\widehat{S}_k]
    +\frac{H_k}{2}(\eta_k-1)^2.
\]

Using $\eta_k-1=[\eta_k-1]_+-[1-\eta_k]_+$ and the two bounds in \cref{eq:current-response-bracket}, we have 
\begin{equation*}
    (\eta_k-1)\mathbb E[A_k\mid\widehat{P}_k,\widehat{S}_k]\ge
    [\eta_k-1]_+
    \left(
        a_-\widehat{P}_k-b_+\widehat{S}_k-\delta_k^{\rm drift}
    \right)
    -[1-\eta_k]_+
    \left(
        a_+\widehat{P}_k-b_-\widehat{S}_k+\delta_k^{\rm drift}
    \right).
\end{equation*}
Substitution yields
\begin{align*}
    &\quad~\mathbb E[
        \phi_k(\eta_k)-\phi_k(1)
        \mid\widehat{P}_k,\widehat{S}_k]\\
        &\le
    -[\eta_k-1]_+
    \left(
        a_-\widehat{P}_k-b_+\widehat{S}_k-\delta_k^{\rm drift}
    \right)
    -[1-\eta_k]_+
    \left(
        b_-\widehat{S}_k-a_+\widehat{P}_k-\delta_k^{\rm drift}
    \right)
    +\frac{H_k}{2}(\eta_k-1)^2\\
    &=-g_k^{\rm lb},
\end{align*}
which proves the claimed gain bound.
If $\eta_k>1$, the first margin condition in the theorem implies $g_k^{\rm lb}>0$; if $\eta_k<1$, the second margin condition implies $g_k^{\rm lb}>0$. In either case, we have
\[
    \mathbb E[
        \phi_k(\eta_k)-\phi_k(1)
        \mid\widehat{P}_k,\widehat{S}_k]
    <0,
\]
which proves the two strict-improvement statements.
\end{proof}

\subsection{Proof of Theorem~\ref{thm:cumulative-gain}}\label{prof_thm7}

\begin{proof}
For $r=0,\ldots,K$, define $ Y_r:=L(\bX_K^{\pi^{[r]}};y)$.
By the definition of the hybrid policies, we have
\[
    Y_0=L(\bX_K^{\pi_0};y),
    \quad
    Y_K=L(\bX_K^\pi;y).
\]

For each $k<K$, the policies $\pi^{[k]}$ and $\pi^{[k+1]}$ follow
$\pi$ for all loops before $k$. Since $\pi$ is causal, these rollouts have the same state and EMA statistics up to loop $k$, and hence the same $\bDelta_k$, $(\widehat{P}_k,\widehat{S}_k)$, and controller factor $\eta_k$. Moreover, both policies use unit relaxation after loop $k$. Therefore, by the definition of $\phi_k$, we have
\[
    \phi_k(1)=Y_k,
    \quad
    \phi_k(\eta_k)=Y_{k+1}.
\]

It follows that
\begin{align*}
    L(\bX_K^\pi;y)-L(\bX_K^{\pi_0};y)
    =Y_K-Y_0
    =\sum_{k=0}^{K-1}(Y_{k+1}-Y_k).
\end{align*}
For $k<K_{\rm warm}$, the proposed policy uses $\eta_k=1$, so
$Y_{k+1}-Y_k=0$. For every $k=K_{\rm warm},\ldots,K-1$, Theorem~\ref{thm:finite-step} gives
\[
    \mathbb E[ Y_{k+1}-Y_k \mid \widehat{P}_k,\widehat{S}_k]  \le -g_k^{\rm lb}.
\]
Taking expectations and applying the tower property yield
\[
    \mathbb E[Y_{k+1}-Y_k]
    \le -\mathbb E[g_k^{\rm lb}].
\]
Hence,
\begin{align*}
    J_K(\pi)-J_K(\pi_0)
    \le
    -\sum_{k=K_{\rm warm}}^{K-1}
      \mathbb E[g_k^{\rm lb}]=-G_K^{\rm lb},
\end{align*}
which completes the proof of $ J_K(\pi)\le J_K(\pi_0)-G_K^{\rm lb}$.

If $N>K$ and
$G_K^{\rm lb}\ge J_K(\pi_0)-J_N(\pi_0)$, then
\[
    J_K(\pi)
    \le J_K(\pi_0)-G_K^{\rm lb}
    \le J_N(\pi_0),
\]
which proves the matched-quality claim.
Finally, if
$G_K^{\rm lb}\ge J_K(\pi_0)-\varepsilon$, then
\[
    J_K(\pi)
    \le J_K(\pi_0)-G_K^{\rm lb}
    \le \varepsilon.
\]
By the definition of $K_\varepsilon(\pi)$,
$K_\varepsilon(\pi)\le K$. If, in addition,
$K<K_\varepsilon(\pi_0)$, then
\[
    K_\varepsilon(\pi)
    \le K
    <K_\varepsilon(\pi_0),
\]
and therefore
$K_\varepsilon(\pi)<K_\varepsilon(\pi_0)$.
\end{proof}

\section{Additional Theoretical Results and Discussions}\label{app:sup_theory}
\subsection{Exact EMA Mean–Variance Representation}

\begin{lemma}[Exact EMA Mean--Variance Representation]
\label{lem:ema-decomposition}
For the normalized weights $w_{k,j}:=(1-\beta)\beta^{k-j}/(1-\beta^{k+1})$ defined in Section~\ref{method}, we have
\begin{equation*}
    \bM_k
    =\sum_{j=0}^{k}w_{k,j}\bDelta_j,
    \quad
     v_k
    =\frac{1}{nd}\sum_{j=0}^{k}
      w_{k,j}\normF{\bDelta_j}^{2}.
\end{equation*}
Consequently, it holds that
\begin{equation*}
    \widehat{P}_k
    =\frac{1}{nd}
      \normF{\sum_{j=0}^{k}w_{k,j}\bDelta_j}^{2},
    \quad
    \widehat{S}_k
    =\frac{1}{nd}\sum_{j=0}^{k}w_{k,j}
      \normF{\bDelta_j-\bM_k}^{2}.
\end{equation*}
\end{lemma}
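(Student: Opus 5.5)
I would prove the two representations by induction on $k$, and then derive the formulas for $\widehat{P}_k$ and $\widehat{S}_k$ from a weighted mean--variance identity.

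Write $c_k:=(\beta-\beta^{k+1})/(1-\beta^{k+1})$ and $d_k:=(1-\beta)/(1-\beta^{k+1})$ for the two recursion coefficients. Note that $d_k=w_{k,k}$.

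\emph{Base case.} For $k=0$, we have $c_0=0$ and $d_0=1$, using the convention $0^0=1$ when $\beta=0$. Since $\bM_{-1}=\bzero$ and $v_{-1}=0$, this gives $\bM_0=\bDelta_0=w_{0,0}\bDelta_0$ and $v_0=\normF{\bDelta_0}^2/(nd)$.

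\emph{Inductive step.} Assume the claim holds at $k-1$. For $j\le k-1$ it suffices to verify the weight identity $c_k w_{k-1,j}=w_{k,j}$. Indeed,
\[
c_k w_{k-1,j}=\frac{\beta(1-\beta^{k})}{1-\beta^{k+1}}\cdot\frac{(1-\beta)\beta^{k-1-j}}{1-\beta^{k}}=\frac{(1-\beta)\beta^{k-j}}{1-\beta^{k+1}}=w_{k,j}.
\]
Here I use $\beta-\beta^{k+1}=\beta(1-\beta^k)$, and $1-\beta^k>0$ because $\beta<1$ and $k\ge1$. Substituting the inductive hypothesis into both recursions then gives
\[
\bM_k=\sum_{j\le k-1}w_{k,j}\bDelta_j+w_{k,k}\bDelta_k,
\]
and the same computation gives the formula for $v_k$. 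The case $\beta=0$ is trivial, since then $w_{k,j}=\mathbb{1}[j=k]$.

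\emph{Formulas for $\widehat{P}_k$ and $\widehat{S}_k$.} The expression for $\widehat{P}_k$ follows immediately from its definition. For $\widehat{S}_k=v_k-\widehat{P}_k$, I expand the weighted centered sum and use $\sum_j w_{k,j}=1$ together with $\sum_j w_{k,j}\bDelta_j=\bM_k$:
\[
\sum_j w_{k,j}\normF{\bDelta_j-\bM_k}^2=\sum_j w_{k,j}\normF{\bDelta_j}^2-2\innerF{\bM_k}{\bM_k}+\normF{\bM_k}^2.
\]
The right-hand side equals $nd\,v_k-\normF{\bM_k}^2=nd\,(v_k-\widehat{P}_k)$. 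Dividing by $nd$ gives the stated expression for $\widehat{S}_k$. This is the same centering algebra as in Proposition~\ref{prop:temporal-decomposition}, with $\bG_{j+1}$ replaced by $\bDelta_j$.

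As a byproduct, this shows $\widehat{S}_k\ge0$, which is used in the proof of Theorem~\ref{thm:proxy-oracle}. I expect no real obstacle here. The only delicate point is the coefficient identity $c_k w_{k-1,j}=w_{k,j}$, including the edge cases $\beta=0$ and $k=0$.
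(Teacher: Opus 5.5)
Your proof is correct and follows the paper's argument. Your induction via the coefficient identity $c_k w_{k-1,j}=w_{k,j}$ (with $d_k=w_{k,k}$) makes explicit the ``unrolling of the recursion'' that the paper states without detail, and your expansion of $\sum_j w_{k,j}\normF{\bDelta_j-\bM_k}^2$ using $\sum_j w_{k,j}=1$ and $\sum_j w_{k,j}\bDelta_j=\bM_k$ is exactly the paper's final step (the normalization $\sum_{j=0}^k w_{k,j}=1$ is a one-line geometric sum, which the paper also takes for granted).
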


\begin{proof}
Unrolling the recursion for $\bM_k$ gives
\begin{equation*}
    \bM_k
    =\sum_{j=0}^{k}
      \frac{(1-\beta)\beta^{k-j}}{1-\beta^{k+1}}\bDelta_j
    =\sum_{j=0}^{k}w_{k,j}\bDelta_j.
\end{equation*}
The same calculation for $v_k$ yields
\begin{equation*}
    v_k
    =\frac{1}{nd}\sum_{j=0}^{k}
      w_{k,j}\normF{\bDelta_j}^{2}.
\end{equation*}
The expression for $\widehat{P}_k$ follows directly from its definition.
Finally, using $\sum_{j=0}^{k}w_{k,j}=1$ and
$\sum_{j=0}^{k}w_{k,j}\bDelta_j=\bM_k$, we have
\begin{equation*}
    \frac{1}{nd}\sum_{j=0}^{k}w_{k,j}
    \normF{\bDelta_j-\bM_k}^{2}
    =
    \frac{1}{nd}\sum_{j=0}^{k}
    w_{k,j}\normF{\bDelta_j}^{2}
    -\frac{1}{nd}\normF{\bM_k}^{2} 
    = v_k-\widehat{P}_k
    =\widehat{S}_k,
\end{equation*}
which completes the proof.
\end{proof}

\subsection{Limits of constant relaxation: a toy example}\label{toy:nonconstant}

We consider a simplified model to provide intuition that a nonconstant schedule may be more effective than any constant schedule.
Let the task error at loop $k$ be measured by $ Q(\be_k):=\sum_{i=1}^r q_i e_{i,k}^2/2$, where $\be_k=(e_{1,k},\ldots,e_{r,k})^\top$ represents $r\ge2$ error components and $q_i>0$ specifies the $i$th component contribution. Suppose each component evolves as $ e_{i,k+1}=(1-\eta_k\lambda_i)e_{i,k}$, where $\lambda_i>0$ characterizes its response to the relaxation factor. After $K$ loops, we have $ e_{i,K} =e_{i,0}\prod_{k=0}^{K-1}(1-\eta_k\lambda_i)$. Thus, choosing $\eta_k=1/\lambda_i$ eliminates the $i$th error component. When the $\lambda_i$ differ, different components favor different relaxation factors, suggesting an advantage for nonconstant schedules. The following proposition formalizes this separation.

\begin{proposition}\label{thm:no-constant}
Under the model above, suppose $q_i>0$ and $e_{i,0}\neq0$ for all $i$, the
$\lambda_i$ are pairwise distinct, $K\ge r$, and
$\lambda_i^{-1}\in[\eta_{\min},\eta_{\max}]$ for all $i$. Then $ \min_{\boldsymbol{\eta}\in[\eta_{\min},\eta_{\max}]^K}
    Q(\be_K^{\boldsymbol{\eta}})=0$.
Moreover, for $i,j\in\{1,\ldots,r\}$ with $\lambda_i<\lambda_j$, every constant
$\eta\in[\eta_{\min},\eta_{\max}]$ satisfies
\begin{equation*}
    Q(\be_K^{(\eta,\ldots,\eta)})
    \ge
    \frac{\min\{q_ie_{i,0}^2,q_j e_{j,0}^2\}}{2}
    \left(
        \frac{\lambda_j-\lambda_i}
             {\lambda_j+\lambda_i}
    \right)^{2K} >0.
\end{equation*}
\end{proposition}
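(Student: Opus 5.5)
The statement has two parts, and I would handle them separately.

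\textbf{Existence part.} I will build an explicit nonconstant schedule. Since $K\ge r$, set $\eta_k=\lambda_{k+1}^{-1}$ for $k=0,\ldots,r-1$ and $\eta_k=1$ (or any admissible value) for $k\ge r$. This is admissible because each $\lambda_i^{-1}\in[\eta_{\min},\eta_{\max}]$ by hypothesis and $1\in[\eta_{\min},\eta_{\max}]$ by the setup. For each $i$, the product $e_{i,K}=e_{i,0}\prod_{k}(1-\eta_k\lambda_i)$ contains the factor $1-\lambda_i^{-1}\lambda_i=0$, so $e_{i,K}=0$ and $Q=0$. Since $Q\ge0$ always, the minimum equals $0$; the minimum is attained by this schedule, and compactness is not even needed.

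\textbf{Lower bound for constant schedules.} For a constant $\eta$, $e_{m,K}=e_{m,0}(1-\eta\lambda_m)^K$. Dropping all components except $i$ and $j$ gives
\[
Q\ge \tfrac12\bigl(q_ie_{i,0}^2|1-\eta\lambda_i|^{2K}+q_je_{j,0}^2|1-\eta\lambda_j|^{2K}\bigr)\ge \tfrac{c}{2}\max\{|1-\eta\lambda_i|,|1-\eta\lambda_j|\}^{2K},
\]
where $c=\min\{q_ie_{i,0}^2,q_je_{j,0}^2\}$. I am using the weaker bound "sum $\ge$ max of terms $\ge c\cdot\max^{2K}$", which suffices.

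It then remains to show that for every $\eta>0$ (indeed every real $\eta$),
\[
\max\{|1-\eta\lambda_i|,|1-\eta\lambda_j|\}\ge\frac{\lambda_j-\lambda_i}{\lambda_j+\lambda_i}.
\]
This is the classical Chebyshev/Richardson minimax argument, which I expect to be the only step needing care. The function $\eta\mapsto|1-\eta\lambda_i|$ is nonincreasing on $[0,1/\lambda_i]$, and $|1-\eta\lambda_j|$ is nondecreasing on $[1/\lambda_j,\infty)$. Their maximum is therefore minimized where $1-\eta\lambda_i=\eta\lambda_j-1$, that is at $\eta^*=2/(\lambda_i+\lambda_j)$, which gives the value $(\lambda_j-\lambda_i)/(\lambda_j+\lambda_i)$.

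To make this rigorous without a minimization argument, I would use a convex combination instead. For any $\eta$,
\[
\lambda_j(1-\eta\lambda_i)-\lambda_i(1-\eta\lambda_j)=\lambda_j-\lambda_i,
\]
so
\[
\lambda_j-\lambda_i\le \lambda_j|1-\eta\lambda_i|+\lambda_i|1-\eta\lambda_j|\le(\lambda_i+\lambda_j)\max\{\cdot,\cdot\}.
\]
Dividing by $\lambda_i+\lambda_j$ gives the bound directly.

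Combining this with the previous display yields the claimed inequality. Strict positivity follows because $c>0$ (as $q>0$ and $e_{\cdot,0}\neq0$) and $\lambda_i<\lambda_j$ makes the ratio positive.
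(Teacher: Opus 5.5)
Your proposal is correct and follows essentially the same route as the paper's proof. You build the same annihilating schedule $\eta_{i-1}=\lambda_i^{-1}$, keep only components $i$ and $j$ for a constant $\eta$, and apply the identity $\lambda_j(1-\eta\lambda_i)-\lambda_i(1-\eta\lambda_j)=\lambda_j-\lambda_i$ with the triangle inequality to get $\max\{|1-\eta\lambda_i|,|1-\eta\lambda_j|\}\ge(\lambda_j-\lambda_i)/(\lambda_j+\lambda_i)$.
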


\begin{proof}
By iterating the component dynamics, we have
\begin{equation*}
    e_{i,K} =e_{i,0}\prod_{k=0}^{K-1}(1-\lambda_i\eta_k), \quad i=1,\ldots,r.
\end{equation*}
Hence,
\begin{equation*}
    Q(\be_K^{\boldsymbol{\eta}})
    =\frac12\sum_{i=1}^{r}q_ie_{i,0}^{2}
      \prod_{k=0}^{K-1}(1-\lambda_i\eta_k)^2.
\end{equation*}
Choose $\eta_{i-1}=\lambda_i^{-1}$, $i=1,\ldots,r$ and choose arbitrary admissible factors for the remaining $K-r$ loops.
For each $i$, the corresponding product contains
$1-\lambda_i\eta_{i-1}=0$, and hence $e_{i,K}=0$.
Therefore $Q(\be_K^{\boldsymbol{\eta}})=0$. The proof of the first claim is completed by the fact that $Q\ge0$.

Now restrict to a constant factor $\eta$ and fix any
$i,j\in\{1,\ldots,r\}$ with $\lambda_i<\lambda_j$. Retaining only these two
components gives
\begin{align*}
    Q(\be_K^{(\eta,\ldots,\eta)})
    &\ge \frac12\left(
       q_i e_{i,0}^{2}|1-\lambda_i\eta|^{2K}
       +q_j e_{j,0}^{2}|1-\lambda_j\eta|^{2K}
       \right) \\
    &\ge \frac12
       \min\{q_i e_{i,0}^{2},q_j e_{j,0}^{2}\}
       \left(
       |1-\lambda_i\eta|^{2K}
       +|1-\lambda_j\eta|^{2K}
       \right).
\end{align*}
By the triangle inequality, we have
\begin{align*}
    \lambda_j|1-\lambda_i\eta|
    +\lambda_i|1-\lambda_j\eta|
    &\ge
    \left|
        \lambda_j(1-\lambda_i\eta)
        -\lambda_i(1-\lambda_j\eta)
    \right| \\
    &=\lambda_j-\lambda_i.
\end{align*}
Therefore, we have
\begin{equation*}
    \max\{|1-\lambda_i\eta|,\,
           |1-\lambda_j\eta|\}
    \ge
    \frac{\lambda_j-\lambda_i}
         {\lambda_j+\lambda_i}.
\end{equation*}
At least one of the two terms in the preceding lower bound is therefore no
smaller than $ \left(\frac{\lambda_j-\lambda_i}{\lambda_j+\lambda_i}\right)^{2K}$. Consequently, we have
\begin{equation*}
    Q(\be_K^{(\eta,\ldots,\eta)})
    \ge
    \frac12
    \min\{q_i e_{i,0}^{2},q_j e_{j,0}^{2}\}
    \left(
        \frac{\lambda_j-\lambda_i}
             {\lambda_j+\lambda_i}
    \right)^{2K},
\end{equation*}
which completes the proof.
\end{proof}

\subsection{Discussion of assumptions}
\label{app:assumption-discussion}

Recall that $P_k^w$ and $S_k^w$ depend on the task loss, whereas only information regarding the recurrent updates can be used to construct the surrogate estimates $\hat{P}_k$ and $\hat S_k$.
We therefore derive properties of TAPS by leveraging two explicit conditions \eqref{eq:conditional-sector-ps}--\eqref{eq:conditional-drift} in Section~\ref{theory}.
In this section, we provide detailed discussions of these conditions.
Section~\ref{assum_nece} explains the role of each condition.
Section~\ref{toy_ration_estimator} further provides a toy but representative model to show that the required relations can hold exactly. All oracle quantities below use the reference schedules of Section~\ref{theory}.

\subsubsection{How the assumptions connect the proxy to the decision}\label{assum_nece}

Let $\mathcal Z_k=(\widehat P_k,\widehat S_k)$. Combining the exact identity
$\bar A_k^w=P_k^w-S_k^w$ from
Proposition~\ref{prop:temporal-decomposition} with the comparison bounds in
\eqref{eq:conditional-sector-ps} gives
\[
    a_-\widehat P_k-b_+\widehat S_k
    \leq \mathbb E[\bar A_k^w\mid\mathcal Z_k]
    \leq a_+\widehat P_k-b_-\widehat S_k.
\]
Hence a larger $\widehat P_k$ raises both bounds on
$\mathbb E[\bar A_k^w\mid\mathcal Z_k]$. A larger $\widehat S_k$ lowers the
lower bound and can only lower, or leave unchanged, the upper bound. The
constants $a_\pm,b_\pm$ allow for scale differences caused by task curvature
and downstream dynamics; unbiased or pointwise estimation is not required.

These bounds connect the observable energies to the EMA-weighted sensitivity $\bar A_k^w$, rather than directly to the current sensitivity $A_k=-\partial L(\bX_K;y)/\partial\eta_k$.
The temporal-mismatch condition \eqref{eq:conditional-drift} then supplies the second link by requiring $|\mathbb E[A_k-\bar A_k^w\mid\mathcal Z_k]|\leq\delta_k^{\rm drift}$.
It therefore expands the two endpoints above by $\pm\delta_k^{\rm drift}$, giving the explicit lower and upper bounds on $\mathbb E[A_k\mid\mathcal Z_k]$ in \eqref{eq:current-response-bracket}.
If the lower bound is positive, then increasing $\eta_k$ above $1$ is the loss-decreasing local direction in conditional expectation. If the upper bound is negative, decreasing $\eta_k$ below $1$ is the corresponding direction.
When either strict sign condition holds and $b_-/a_+\leq\gamma\leq b_+/a_-$, Theorem~\ref{thm:proxy-oracle} shows that the sign of $\widehat{B}_k$ agrees with this direction, and the proposed strategy $\eta_k$ makes the corresponding change.
Together, the two conditions separate the approximation into two distinct requirements: proxy alignment with the task-dependent quantities and temporal alignment with the current decision.

\subsubsection{A two-mode fixed-point example}\label{toy_ration_estimator}
\label{app:calibrated-toy}

We provide a two-mode fixed-point model in which conditions \eqref{eq:conditional-sector-ps}--\eqref{eq:conditional-drift} hold exactly at the first controlled decision.
Let $e_j=(x_j,z_j)\in\mathbb R^{1\times2}$ denote the error from the desired fixed point after $j$ updates, with initial error $e_0=(x_0,z_0)$.
This specializes $\bX_j$ to $n=1$ and $d=2$; fixed input and target arguments are suppressed.
For $0<r<1$, consider
\begin{equation}
    L(x,z)=\frac12\big((1-r)x^2+(1+r)z^2\big),
    \qquad
    \Delta(x,z)=\big(-(1-r)x,-(1+r)z\big).
\label{eq:app-toy-dynamics}
\end{equation}
Write $H=\operatorname{diag}(1-r,1+r)$ and $\Delta_j:=\Delta(e_j)=-e_jH$.
The recurrence $e_{j+1}=e_j(I_2-\eta_jH)$ with $I_2$ the $2\times2$ identity is gradient descent on this quadratic.
At unit relaxation, its two factors are $r$ and $-r$: the first mode contracts without changing sign, while the second alternates.
Increasing the scale locally can therefore help one mode and hurt the other. These factors arise naturally from a quadratic with extreme curvatures $0<\lambda_{\min}<\lambda_{\max}$: the balanced step $\alpha_*:=2/(\lambda_{\min}+\lambda_{\max})$ gives $\alpha_*\lambda_{\min}=1-r$ and $\alpha_*\lambda_{\max}=1+r$, where $r=(\lambda_{\max}-\lambda_{\min})/(\lambda_{\max}+\lambda_{\min})$.
Thus unit relaxation here corresponds to the balanced step after absorbing $\alpha_*$ into the curvature.

Consider the EMA estimator of Section~\ref{method} with $\beta=r$ and $K_{\rm warm}=1$ such that $\eta_0=1$ and $e_1=(rx_0,-rz_0)$.
Before selecting $\eta_1$, the available updates $\Delta_0,\Delta_1$ have weights $w_{1,0}=r/(1+r)$ and $w_{1,1}=1/(1+r)$, giving
\[
    M_1=\frac{r\Delta_0+\Delta_1}{1+r}
       =\left(-\frac{2r(1-r)}{1+r}x_0,\ 0\right).
\]
Since the alternating factor is $-r=-\beta$, its contribution cancels from $M_1$ and hence from both $\widehat P_1$ and $P_1^w$.
It remains in $\widehat S_1$, which also contains a contribution from the monotone mode.

Fix a horizon $K\geq2$.
Following Section~\ref{theory}, let $\eta^{(1,t)}$ be the $K$-step schedule with factor $t\in[\eta_{\min},\eta_{\max}]$ at $j=1$ and unit factors otherwise, and set $\phi_1(t):=L(e_K^{\eta^{(1,t)}})$, where $e_K^\eta$ is the terminal error under schedule $\eta$.
All oracle quantities below use the all-unit reference $\eta^{(1,1)}$, whereas $\eta_1$ denotes the controller's selected factor.
The loss comparison therefore changes only the decision at $k=1$, with unit relaxation thereafter.

\begin{proposition}[Exact proxy--oracle proportionality and loss decrease]
\label{prop:app-exact-toy}
Under the setup above, every initial state satisfies
\begin{equation}\label{eq:app-toy-calibration}
    P_1^w=a\widehat P_1,\quad
    S_1^w=b\widehat S_1,\quad
    A_1=\bar A_1^w,
\end{equation}
with $a=r^{2K-3}(1+r^2)$ and $b=2r^{2K-2}$.
For any initial-state distribution with finite second moment, these identities imply \eqref{eq:conditional-sector-ps}--\eqref{eq:conditional-drift} at $k=1$ with $a_-=a_+=a$, $b_-=b_+=b$, and $\delta_1^{\rm drift}=0$.
Choose
\begin{equation}
    \gamma=\frac{b}{a}=\frac{2r}{1+r^2},\qquad
    0<\rho<\frac{2r}{1+r},\qquad
    0<\eta_{\min}<1<\eta_{\max}.
\label{eq:app-toy-controller}
\end{equation}
For any $\epsilon_{\rm num}>0$, the controller \eqref{eq:controller} satisfies $\operatorname{sign}(\eta_1-1)=\operatorname{sign}(A_1)$ and
\[
    \phi_1(\eta_1)<\phi_1(1)
    \quad\text{whenever}\quad
    (1-r)^2x_0^2\ne(1+r)^2z_0^2.
\]
Otherwise, $\eta_1=1$ and the loss is unchanged.
For each fixed initial state, the gain bound $g_1^{\rm lb}$ in Theorem~\ref{thm:finite-step} equals the actual loss decrease when $H_1$ is chosen as the exact curvature of $\phi_1$.
\end{proposition}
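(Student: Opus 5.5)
The approach is to compute everything in closed form. The quadratic model decouples into the $x$-mode and the $z$-mode, and on the all-unit reference schedule $\eta^{(1,1)}$ every quantity is an explicit monomial in $r$ times $x_0^2$ or $z_0^2$. Three steps follow: (i) compute the oracle quantities $A_j$, $P_1^w$, $S_1^w$; (ii) compute the proxies $\widehat P_1,\widehat S_1$ and match coefficients to get \eqref{eq:app-toy-calibration}; (iii) analyse the one-dimensional function $\phi_1$, which is an exact quadratic in $t$.

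\textbf{Oracle quantities.} On the unit schedule, $e_j=(r^jx_0,(-r)^jz_0)$ and $\Delta_j=-e_jH$. Every Jacobian equals $D_j=I_2-H=\operatorname{diag}(r,-r)$, and $D_j$ is self-adjoint. Hence
\[
\bG_j=e_KH\operatorname{diag}(r,-r)^{K-j}=\bigl((1-r)r^{2K-j}x_0,\ (1+r)(-r)^{2K-j}z_0\bigr).
\]
Substituting into $A_j=-\innerF{\bG_{j+1}}{\Delta_j}$, the powers of $r$ combine so that $A_j$ does not depend on $j$:
\[
A_j=r^{2K-1}\bigl[(1-r)^2x_0^2-(1+r)^2z_0^2\bigr].
\]
Because the weights are normalized, $A_1=\bar A_1^w$, which gives $\delta_1^{\rm drift}=0$.

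Next, with $w_{1,0}=r/(1+r)$ and $w_{1,1}=1/(1+r)$, the $z$-component of $\bar\Delta_1^w$ cancels, as already noted for $M_1$. The $x$-component of $\bar\bG_1^w$ is $(1-r)r^{K-2}(1+r^2)x_0\cdot r^{K}/(1+r)$ (up to bookkeeping of powers). This yields
\[
P_1^w=\frac{2(1-r)^2r^{2K-1}(1+r^2)}{(1+r)^2}\,x_0^2,\qquad \widehat P_1=\frac{1}{2}\,\frac{4r^2(1-r)^2}{(1+r)^2}\,x_0^2,
\]
where the factor $\tfrac12$ in $\widehat P_1$ is $1/(nd)$. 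Their ratio is $a=r^{2K-3}(1+r^2)$.

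\textbf{Matching $S$.} I will use $S_1^w=P_1^w-\bar A_1^w$ (Proposition~\ref{prop:temporal-decomposition}) rather than expanding the centered sum. For $\widehat S_1$ I will use $\widehat S_1=v_1-\widehat P_1$ together with Lemma~\ref{lem:ema-decomposition}. Both $S_1^w$ and $\widehat S_1$ are of the form $\alpha x_0^2+\beta z_0^2$. The claim $S_1^w=b\widehat S_1$ with $b=2r^{2K-2}$ therefore requires \emph{both} coefficients to match with the same $b$. This two-coefficient check is where the choice $\beta=r$ enters: it kills the alternating mode in $M_1$. I expect the verification to be routine but error-prone algebra.

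Since the identities hold pointwise and all quantities are quadratic in $(x_0,z_0)$, finite second moments give integrability. Taking conditional expectations given $(\widehat P_1,\widehat S_1)$ then yields \eqref{eq:conditional-sector-ps}--\eqref{eq:conditional-drift} with $a_\pm=a$, $b_\pm=b$, $\delta_1^{\rm drift}=0$.

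\textbf{Sign agreement.} With $\gamma=b/a$,
\[
\operatorname{sign}\widehat B_1=\operatorname{sign}(\widehat P_1-\gamma\widehat S_1)=\operatorname{sign}(a\widehat P_1-b\widehat S_1)=\operatorname{sign}(\bar A_1^w)=\operatorname{sign}(A_1).
\]
Because $\eta_{\min}<1<\eta_{\max}$, clipping preserves the sign of $\eta_1-1$. If $(1-r)^2x_0^2=(1+r)^2z_0^2$, then $A_1=0$, so $\widehat B_1=0$ and $\eta_1=1$.

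\textbf{Loss decrease.} Only the second factor changes, so $e_K$ is affine in $t$. Hence $\phi_1$ is exactly quadratic:
\[
\phi_1(t)=\phi_1(1)-(t-1)A_1+\tfrac{H_1}{2}(t-1)^2,
\]
with curvature $H_1=r^{2K-2}\bigl[(1-r)^3x_0^2+(1+r)^3z_0^2\bigr]$. The Taylor bound of Theorem~\ref{thm:finite-step} is then an equality, and with exact $a,b$ and $\delta=0$ we get $g_1^{\rm lb}=\phi_1(1)-\phi_1(\eta_1)$.

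Strict decrease requires $0<|\eta_1-1|<2|A_1|/H_1$. Clipping only shrinks $|\eta_1-1|$, so it suffices to show
\[
\rho\,|\widehat B_1|<\frac{2|A_1|}{H_1}.
\]
Using $|\widehat B_1|\le |a\widehat P_1-b\widehat S_1|/(a\widehat P_1+b\widehat S_1)=|A_1|/(P_1^w+S_1^w)$, the condition reduces to $\rho H_1<2(P_1^w+S_1^w)$. Here $P_1^w+S_1^w$ is an explicit positive quadratic form in $(x_0,z_0)$, so this is a coefficientwise comparison of two such forms. I expect it to hold exactly when $\rho<2r/(1+r)$, with the binding constraint coming from one of the two modes. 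This comparison is the main obstacle. If the crude bound on $|\widehat B_1|$ turns out too lossy, I will instead work directly with the exact value $1+\rho\widehat B_1$ and use the positive $\epsilon_{\rm num}$ term in the denominator, which only reduces $|\widehat B_1|$.
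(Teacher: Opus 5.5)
Your proposal is correct and follows essentially the same route as the paper's proof. Both compute the oracle and proxy quantities in closed form on the all-unit reference, use $\gamma=b/a$ to get $\widehat P_1-\gamma\widehat S_1=A_1/a$, and combine the exact quadratic expansion of $\phi_1$ with $|\widehat B_1|\le|A_1|/(P_1^w+S_1^w)$ (the paper's lower bound on $a(\widehat P_1+\gamma\widehat S_1+\epsilon_{\rm num})$). The two checks you defer both go through: the $x_0^2$ and $z_0^2$ coefficients of $S_1^w$ and $b\widehat S_1$ agree. In $2(P_1^w+S_1^w)-\rho H_1$ the binding coefficient is the alternating mode's, $r^{2K-2}(1+r)^2\bigl(2r-\rho(1+r)\bigr)z_0^2$, which is positive exactly when $\rho<2r/(1+r)$; the $x_0^2$ coefficient is then automatically positive.
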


\begin{proof}
Let $U_0=(1-r)x_0$ and $V_0=(1+r)z_0$.
The two-observation EMA gives
\begin{equation}\label{eq:app-toy-proxies}
    \widehat P_1=\frac{2r^2}{(1+r)^2}U_0^2,\quad
    \widehat S_1=\frac{r(1-r)^2}{2(1+r)^2}U_0^2+\frac r2V_0^2.
\end{equation}
Along the reference, $e_j=(r^jx_0,(-r)^jz_0)$ and $\Delta_j=(-r^jU_0,-(-r)^jV_0)$.
The adjoint recursion gives $G_{j+1}=(r^{2K-j-1}U_0,(-r)^{2K-j-1}V_0)$, hence $A_j=r^{2K-1}(U_0^2-V_0^2)$ for $j=0,\ldots,K-1$.
Using $w_{1,0},w_{1,1}$ in the oracle definitions yields
\begin{equation}\label{eq:app-toy-oracles}
    P_1^w=\frac{2r^{2K-1}(1+r^2)}{(1+r)^2}U_0^2,
    \quad
    S_1^w=r^{2K-1}\left(\frac{(1-r)^2}{(1+r)^2}U_0^2+V_0^2\right).
\end{equation}
These expressions and $A_0=A_1$ prove \eqref{eq:app-toy-calibration}.
In particular, $A_1=a(\widehat P_1-\gamma\widehat S_1)$.
Under the single-factor schedule $\eta^{(1,t)}$, the terminal loss is
\[
    \phi_1(t)=\frac{r^{2K-2}}2
       \left((1-r)x_0^2(1-(1-r)t)^2
            +(1+r)z_0^2(1-(1+r)t)^2\right).
\]
Its curvature $H_1:=\phi_1''(t)=r^{2K-2}\big((1-r)U_0^2+(1+r)V_0^2\big)$ is constant in $t$ but depends on $e_0$; finite second moment alone need not give a uniform bound on $H_1$.

Set $\mathcal N_1:=\widehat P_1+\gamma\widehat S_1+\epsilon_{\rm num}>0$ and $\widetilde s:=\rho A_1/(a\mathcal N_1)$.
The actual change is $s:=\operatorname{clip}(1+\widetilde s,\eta_{\min},\eta_{\max})-1=\eta_1-1$.
Because $1\in(\eta_{\min},\eta_{\max})$, clipping preserves the sign of nonzero $\widetilde s$ and can only reduce its magnitude.
Moreover, $a\mathcal N_1\geq r^{2K-1}(c_rU_0^2+V_0^2)$, where $c_r:=1+2(1-r)^2/(1+r)^2\geq1$, so $H_1/(a\mathcal N_1)\leq(1+r)/r$.
For $A_1\ne0$, we have $H_1>0$, and \eqref{eq:app-toy-controller} ensures $0<|s|\leq|\widetilde s|=\rho|A_1|/(a\mathcal N_1)<2|A_1|/H_1$.
The exact quadratic expansion gives
\[
    \phi_1(1)-\phi_1(\eta_1)
    =|s|\,|A_1|-\frac{H_1}{2}s^2>0.
\]
For fixed $e_0$, the exact comparison constants and zero temporal mismatch make this precisely $g_1^{\rm lb}$.
If $A_1=0$, then $\widetilde s=s=0$.
\end{proof}

By Proposition \ref{prop:app-exact-toy}, the proposed persistence--fluctuation statistics can align exactly with their task-dependent oracle counterparts, with zero temporal mismatch at the controlled decision. 
We note that this simplified model captures a canonical local behavior of recurrent fixed-point dynamics: some modes contract monotonically, while others alternate in sign. By isolating one mode of each type, it represents the same persistence--fluctuation competition that motivates TAPS.

\section{Additional Experiments}\label{secc}

\subsection{Experimental Setup}
\label{app:experimental_details}

\begin{table}[t]
\centering
\caption{
Evaluation protocols for language-model and intermediate-layer recurrence
experiments.
All evaluations use \texttt{lm-evaluation-harness}~\citep{eval-harness}
with bfloat16 weights and greedy decoding for generative tasks.
Ouro and Huginn use automatic vLLM batching, whereas intermediate-layer
recurrence experiments use batch size 1.
}
\label{tab:eval-protocol}

\small
\setlength{\tabcolsep}{5.5pt}
\renewcommand{\arraystretch}{1.08}

\begin{tabular}{lccl}
\toprule
\textbf{Benchmark}
& \textbf{Shots}
& \textbf{Prompt format}
& \textbf{Scoring} \\
\midrule

MMLU
& 0
& standard
& label log-likelihood \\

ARC-C / HellaSwag
& 0
& standard
& normalized label log-likelihood \\

GSM8K
& 0 (CoT)
& \texttt{Let's think step by step}
& extracted exact match \\

CommonsenseQA
& 7
& standard
& label log-likelihood \\

GPQA-Main
& 0
& standard
& label log-likelihood \\

MMLU-Pro (1k)
& 5 (CoT)
& Qwen chat template
& regex-extracted exact match \\

\bottomrule
\end{tabular}
\end{table}

\textbf{Latent-state recurrence.}
We evaluate TRM~\citep{TRM} on Sudoku-Extreme and Maze-Hard~\citep{wang2025hierarchicalreasoningmodel} with $L=2$ recurrent layers, $H_{\mathrm{cycles}}=3$, and $L_{\mathrm{cycles}}=6$, giving a recurrent horizon of $H=32$ on Sudoku and $H=16$ on Maze.
For Sudoku-Extreme, we evaluate five disjoint subsets of 10{,}000 test examples each (50{,}000 total; seeds $0$--$4$), while Maze-Hard uses the full 1{,}000-example test set.
We report exact accuracy: a prediction is correct only if the entire output grid matches the reference solution.
All methods use the same evaluation examples within each task.
For the difficulty-stratified analysis, we group Sudoku-Extreme puzzles by
empty-cell count~\citep{prates2018problem}.

\textbf{Language-model recurrence.}
We evaluate Ouro-1.4B, Ouro-2.6B~\citep{ouro}, and Huginn-0125~\citep{geiping2025scalingtesttimecomputelatent} on MMLU, ARC-Challenge, HellaSwag, and GSM8K~\citep{mmlu,arc,Hellaswag,GSM8K}, using the recurrent depth released with each checkpoint.
Prompting and decoding settings follow Table~\ref{tab:eval-protocol}; GSM8K is scored by exact match on the extracted final answer, and the remaining tasks by likelihood ranking over the candidate labels.

\textbf{Intermediate-layer recurrence.}
We follow the target configuration of~\citet{chen2026training} and apply block recurrence to
layers 15--18 of a frozen Qwen3-4B-Instruct~\citep{yang2025qwen3}, with $K=3$ damped forward
Euler stages applied during both prefill and decoding:
\begin{equation}
    \mathbf{h}_{k+1}
    = \mathbf{h}_{k}
    + \frac{\eta_k}{3}\left(F(\mathbf{h}_{k})-\mathbf{h}_{k}\right),
\label{eq:intermediate-euler}
\end{equation}
where $F$ denotes the selected block.
The Loop baseline fixes $\eta_k=1$, i.e.\ an Euler step of $1/3$, and Base is the unmodified
model.
Benchmarks are CommonsenseQA (full validation split, 1{,}221 questions), GPQA-Main (448
questions), and a 1{,}000-question subset of MMLU-Pro drawn from the shuffled test
split~\citep{commonsenseqa,gpqa,mmlupro}; MMLU-Pro subject scores are weighted by evaluated
sample count.

\textbf{Scheduler configuration.}
All six controllers share $\gamma=1.5$, $\rho=0.275$, $\eta_k\in[0.8,1.2]$, and $\beta=0.8$
where a moving average is used.
A fresh controller is initialized at every forward call, so no history is carried across
generated tokens; during prefill the controller aggregates over prompt positions and hidden
dimensions to produce one multiplier per sequence.
The first stage is a warmup step with $\eta_1=1$ whose statistics seed the two adaptive stages.
We report $\bar{\eta}=\frac{1}{3}\sum_{k=1}^{3}\mathbb{E}_{\mathrm{forward}}[\eta_k]$, so that
$\bar{\eta}=1$ under Loop.
Hyperparameters were selected on CommonsenseQA and GPQA-Main and applied unchanged to MMLU-Pro.

\subsection{Additional Experiments on Training Inference Co-design}
\label{app:training_codesign}

\textbf{Training setting.}
We conduct a paired SFT experiment initialized from the same pretrained Sudoku
TRM checkpoint. Task-only SFT and P/S co-design use identical data order,
optimization hyperparameters, architecture, and random seed, as summarized
in Table~\ref{tab:codesign-training-recipe}. The task-only baseline retains
unit-step recurrent inference. For co-design, TAPS is activated throughout
training and the objective is augmented with
\[
\mathcal{L}_{\mathrm{total}}
=
\mathcal{L}_{\mathrm{task}}
+
\mathcal{L}_{\mathrm{ACT}}
+
\mu\sum_k
\operatorname{ReLU}\!\left(S_k-\kappa P_k\right),
\]
where \(\mu\) controls the strength of the balance penalty and \(\kappa\)
sets the tolerated fluctuation-to-progress ratio.
We use \(\mu=10^{-3}\) and \(\kappa=4\) in all co-design experiments.
Checkpoints are saved every 100 epochs and evaluated on the same 1{,}000
held-out examples.

\begin{figure}[t]
    \centering
    \includegraphics[width=1\linewidth]{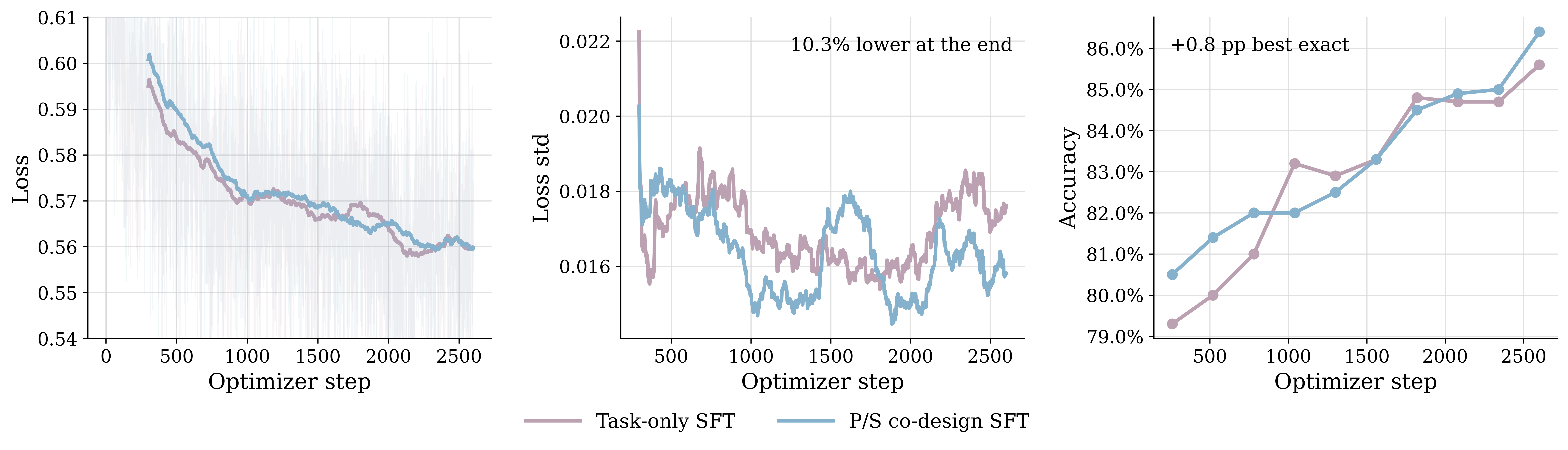}
    \caption{
    Training--inference co-design improves the stability and generalization of recurrent reasoning.
    \textit{Left:} Training task loss, shown with raw observations and a 300-step moving average.
    \textit{Middle:} Local standard deviation of the training loss over the same window; P/S co-design reduces late-stage variability by 10.3\%.
    \textit{Right:} Validation exact accuracy at matched checkpoints, where P/S co-design improves the best accuracy by 0.8 percentage points.
    }
    \label{fig:codesign_loss}
\end{figure}

\begin{figure}[t]
    \centering
    \includegraphics[width=0.8\linewidth]{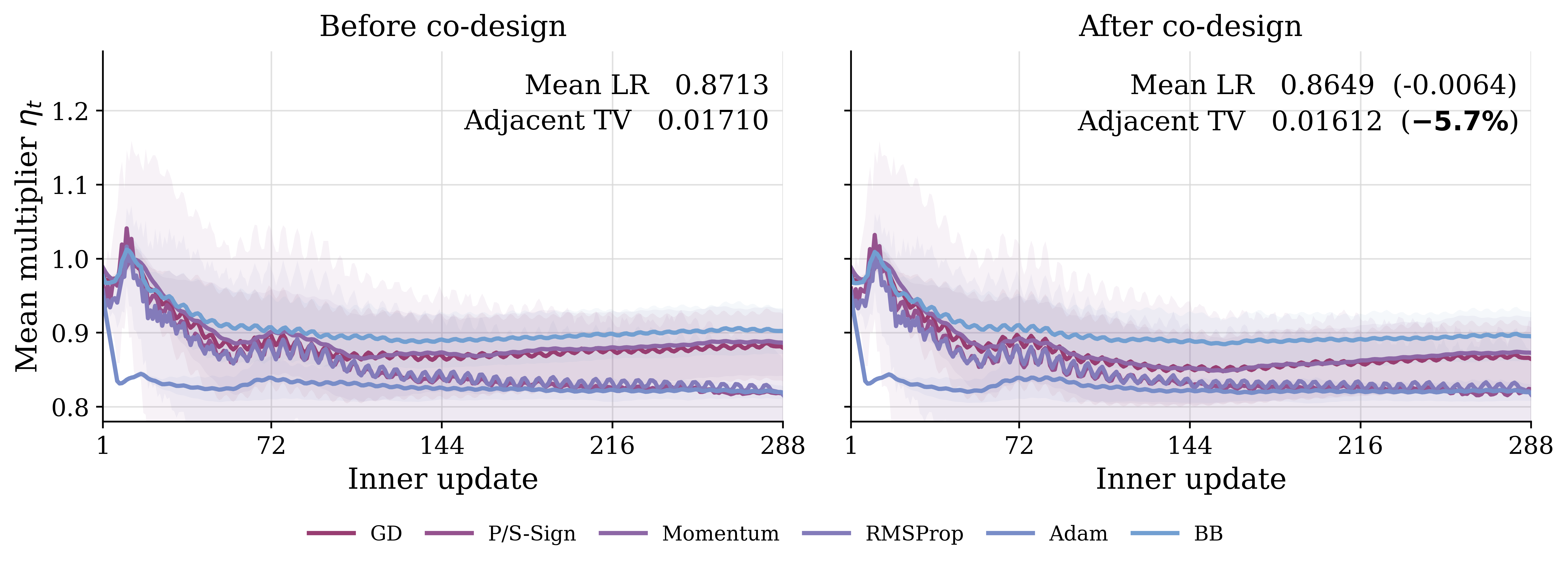}
    \caption{
    P/S co-design produces smoother inference-time step-size schedules.
    Each curve shows the mean multiplier of one TAPS controller over 100 Sudoku examples, smoothed with a nine-step centered moving average; shading denotes one standard deviation across examples.
    After co-design, the average multiplier changes only slightly, while the adjacent total variation decreases by 5.7\%, indicating more temporally stable step-size adaptation.
    }
    \label{fig:codesign_lr}
\end{figure}

\begin{table}[t]
\centering
\caption{
Training recipe for task-only SFT and P/S co-design SFT on Sudoku-Extreme.
The two runs share all listed settings except the inference controller and the balance penalty.
}
\label{tab:codesign-training-recipe}

\small
\setlength{\tabcolsep}{9pt}
\renewcommand{\arraystretch}{1.10}

\begin{tabular}{l|cc}
\toprule
&
\textbf{Task-only SFT}
&
\textbf{P/S co-design SFT} \\
\midrule

\multicolumn{3}{l}{\textbf{Training hyperparameters}} \\

Dataset
& \multicolumn{2}{c}{Sudoku-Extreme} \\

Initialization
& \multicolumn{2}{c}{TRM checkpoint at step 65,100} \\

Training budget
& \multicolumn{2}{c}{1,000 epochs / 2,600 optimizer steps} \\

Global batch size
& \multicolumn{2}{c}{384} \\

Learning rate
& \multicolumn{2}{c}{\(1\times10^{-5}\)} \\

LR scheduler
& \multicolumn{2}{c}{100-step warmup + cosine decay} \\

Minimum LR ratio
& \multicolumn{2}{c}{0.1} \\

Weight decay
& \multicolumn{2}{c}{0.1} \\

EMA decay
& \multicolumn{2}{c}{0.999} \\

Random seed
& \multicolumn{2}{c}{0} \\

\midrule
\multicolumn{3}{l}{\textbf{Recurrent configuration}} \\

Recurrent layers
& \multicolumn{2}{c}{\(L=2\)} \\

High-level cycles
& \multicolumn{2}{c}{\(H_{\mathrm{cycles}}=3\)} \\

Low-level cycles
& \multicolumn{2}{c}{\(L_{\mathrm{cycles}}=6\)} \\

\midrule
\multicolumn{3}{l}{\textbf{Inference controller}} \\

Training-time adaptation
& Disabled
& TAPS \\

Moment decay \(\beta\)
& --
& 0.8 \\

Adaptation radius \(\rho\)
& --
& 0.2 \\

Multiplier range
& \(\eta_k=1\)
& \([0.8,1.2]\) \\

Controller warmup
& --
& 2 updates \\

Fluctuation weight \(\gamma\)
& --
& 1.0 \\

\midrule
\multicolumn{3}{l}{\textbf{Training objective}} \\

Task objective
& \(\mathcal{L}_{\mathrm{task}}+\mathcal{L}_{\mathrm{ACT}}\)
& \(\mathcal{L}_{\mathrm{task}}+\mathcal{L}_{\mathrm{ACT}}\) \\

Balance ratio \(\kappa\)
& --
& 4.0 \\

Balance penalty
& --
& \(10^{-3}\sum_k[S_k-4P_k]_+\) \\

\bottomrule
\end{tabular}
\end{table}

\textbf{Analysis.}
Co-design affects the recurrent dynamics in two complementary ways.

\textit{Training dynamics.}
Co-design leaves the final training loss largely unchanged, but reduces its
late-stage variability by \(10.3\%\).
At matched training loss, it lowers held-out terminal loss by \(5.9\%\) and
improves exact accuracy by \(0.8\) percentage points.
This suggests that the gain does not come from stronger fitting alone; rather,
the additional objective favors recurrent trajectories with more stable
dynamics and better held-out performance.

\textit{Inference controllability.}
Co-design also makes the resulting trajectories easier for TAPS to control.
The mean multiplier changes only slightly, from \(0.8713\) to \(0.8649\),
while its adjacent total variation decreases by \(5.7\%\).
Thus, the gain is not explained by uniformly smaller update scales.
Instead, co-design yields smoother scale adaptation and reduces abrupt
corrections across successive recurrent updates.

Overall, co-design aligns the learned recurrent dynamics more closely with
test-time scale control.
Rather than simply changing the average update magnitude, it produces
trajectories on which TAPS can preserve productive updates, damp fluctuations
when needed, and reach a better accuracy--efficiency trade-off.

\subsection{Additional Experiments on Adaptive Exit}
\label{app:adaptive_exit}

We use the magnitude of recent hidden-state updates to determine whether an example has converged.
Let \(\Delta_k=\Phi_\theta(X_k;u)-X_k\) denote the residual proposed at recurrent step \(k\).
We define the step-size-adjusted residual activity as
\[
A_k^{\mathrm{exit}}
=
\frac{\eta_k^2}{2nd}
\left(
\|\Delta_{k-1}\|_F^2
+
\|\Delta_k\|_F^2
\right).
\]
Averaging two consecutive residuals reduces sensitivity to transient fluctuations, while the step-size factor adjusts the statistic to the scale of the recurrent update. We normalize this quantity by
\[
R_k=
\frac{A_k^{\mathrm{exit}}}{\max_{j\leq k}A_j^{\mathrm{exit}}+\epsilon},
\]
so that \(R_k\) measures the remaining update activity relative to its peak along the trajectory. We exit when the prediction remains unchanged and \(R_k\leq\tau\) for \(p\) consecutive loops after a minimum depth \(k{\min}\). We use \(k_{\min}=4\), \(p=2\), and \(\tau=0.15\).

\begin{table}[t]
\centering
\caption{
Accuracy (\%) and inference cost of adaptive-exit rules on 1{,}000 held-out Sudoku-Extreme examples.
\textit{Effective Updates} counts recurrent-layer applications per example, and \textit{Compute Saved} is the reduction in mean updates relative to Base, where negative values indicate additional computation.
Base and TAPS use a maximum of 32 outer loops; FPRM uses its official stopping rule.
}
\label{tab:taps-adaptive-exit}

\small
\setlength{\tabcolsep}{5.2pt}
\renewcommand{\arraystretch}{1.08}

\begin{tabular}{lccccc}
\toprule

\multirow{2}{*}{\textbf{Inference method}}
& \multicolumn{1}{c}{\textbf{Accuracy}}
& \multicolumn{3}{c}{\textbf{Effective Updates}}
& \multirow{2}{*}{\textbf{Compute Saved}} \\

\cmidrule(lr){2-2}
\cmidrule(lr){3-5}

& \textit{Exact}
& \textit{Mean}
& \textit{Median}
& \textit{Max}
& \\

\midrule

\rowcolor{benchmarkbg}
\multicolumn{6}{l}{
    \hspace{0.4em}\textit{Baseline}
} \\

Base (unit step)
& 90.40 & 1152.0 & 1152 & 1152 & -- \\

FPRM
& 91.10 & 1497.0 & 284 & 25838 & $-29.9$\% \\

\addlinespace[2pt]
\midrule
\addlinespace[2pt]

\rowcolor{benchmarkbg}
\multicolumn{6}{l}{
    \hspace{0.4em}\textit{TAPS adaptive exit (ours)}
} \\

\textbf{TAPS (GD)}
& 90.10 & 317.6 & 180 & 1152 & 72.4\% \\

\textbf{TAPS (P/S-Sign)}
& 89.10 & 319.0 & \textbf{144} & 1152 & 72.3\% \\

\textbf{TAPS (Momentum)}
& 90.30 & 319.0 & 180 & 1152 & 72.3\% \\

\textbf{TAPS (RMSProp)}
& 90.70 & 317.6 & 180 & 1152 & 72.4\% \\

\textbf{TAPS (Adam)}
& \textbf{91.20}
& \textbf{316.5}
& 162
& 1152
& \textbf{72.5\%} \\

\textbf{TAPS (BB)}
& 90.50 & 317.9 & \textbf{144} & 1152 & 72.4\% \\

\bottomrule
\end{tabular}

\end{table}

\subsection{Additional Experiments on Update Frequency}
\label{app:update_frequency}

\begin{table*}[t]
\centering
\caption{
Accuracy (\%) and inference speed of inner- and outer-loop step-size scheduling on Maze ($H=16$), before and after P/S co-design SFT.
$\Delta_{\mathrm{I-O}}$ is inner minus outer terminal accuracy, in percentage points; N/R indicates that the unit-step target is not reached within 16 loops.
}
\label{tab:inner-outer-maze-codesign}

\small
\setlength{\tabcolsep}{5.0pt}
\renewcommand{\arraystretch}{1.08}

\resizebox{\linewidth}{!}{
\begin{tabular}{lcccccccccc}
\toprule

\multirow{3}{*}{\textbf{Inference method}}
& \multicolumn{5}{c}{\textbf{Inference-only}}
& \multicolumn{5}{c}{\textbf{P/S co-design SFT}} \\

\cmidrule(lr){2-6}
\cmidrule(lr){7-11}

& \multicolumn{2}{c}{\textbf{Inner}}
& \multicolumn{2}{c}{\textbf{Outer}}
& \multirow{2}{*}{\(\boldsymbol{\Delta}_{\mathrm{I-O}}\)}
& \multicolumn{2}{c}{\textbf{Inner}}
& \multicolumn{2}{c}{\textbf{Outer}}
& \multirow{2}{*}{\(\boldsymbol{\Delta}_{\mathrm{I-O}}\)} \\

\cmidrule(lr){2-3}
\cmidrule(lr){4-5}
\cmidrule(lr){7-8}
\cmidrule(lr){9-10}

& \textit{Accuracy} & \textit{Speedup}
& \textit{Accuracy} & \textit{Speedup} &
& \textit{Accuracy} & \textit{Speedup}
& \textit{Accuracy} & \textit{Speedup} & \\

\midrule

\rowcolor{benchmarkbg}
\multicolumn{11}{l}{\hspace{0.4em}\textit{Fixed schedules}} \\

Standard ($\eta=1.0$)
& 78.80 & 1.000$\times$
& 78.80 & 1.000$\times$
& $0.00$
& 78.80 & 1.249$\times$
& 78.80 & 1.249$\times$
& $0.00$ \\

Constant ($\eta=0.8$)
& 78.40 & N/R
& 78.80 & 1.245$\times$
& $-0.40$
& 79.40 & 1.226$\times$
& 79.50 & 1.245$\times$
& $-0.10$ \\

Constant ($\eta=1.2$)
& 77.00 & N/R
& 78.70 & 1.245$\times$
& $-1.70$
& 77.40 & N/R
& 79.50 & \textbf{1.660$\times$}
& $-2.10$ \\

\addlinespace[2pt]
\midrule
\addlinespace[2pt]

\rowcolor{benchmarkbg}
\multicolumn{11}{l}{\hspace{0.4em}\textit{Adaptive controllers (ours)}} \\

\textbf{TAPS (GD)}
& 79.00 & 1.132$\times$
& 78.80 & \textbf{0.989$\times$}
& $+0.20$
& 79.80 & 1.508$\times$
& 79.00 & 1.647$\times$
& $+0.80$ \\

\textbf{TAPS (P/S-Sign)}
& \textbf{79.10} & 1.171$\times$
& 78.80 & 0.989$\times$
& $+0.30$
& \textbf{79.90} & \textbf{1.561$\times$}
& \textbf{79.10} & 1.647$\times$
& $+0.80$ \\

\textbf{TAPS (Momentum)}
& 78.90 & \textbf{1.509$\times$}
& 78.70 & 0.989$\times$
& $+0.20$
& 79.80 & 1.508$\times$
& 79.00 & 1.647$\times$
& $+0.80$ \\

\textbf{TAPS (RMSProp)}
& 79.00 & 1.131$\times$
& \textbf{78.90} & 0.989$\times$
& $+0.10$
& 79.70 & 1.507$\times$
& 79.00 & \textbf{1.647$\times$}
& $+0.70$ \\

\textbf{TAPS (Adam)}
& \textbf{79.10} & 1.117$\times$
& \textbf{78.90} & 0.988$\times$
& $+0.20$
& 79.80 & 1.488$\times$
& 78.90 & 1.646$\times$
& $\mathbf{+0.90}$ \\

\textbf{TAPS (BB)}
& 78.90 & 1.465$\times$
& 78.70 & 0.986$\times$
& $+0.20$
& 79.60 & 1.465$\times$
& 78.90 & 1.642$\times$
& $+0.70$ \\

\bottomrule
\end{tabular}
}
\end{table*}

\begin{figure}[t]
    \centering
    \includegraphics[width=1\linewidth]{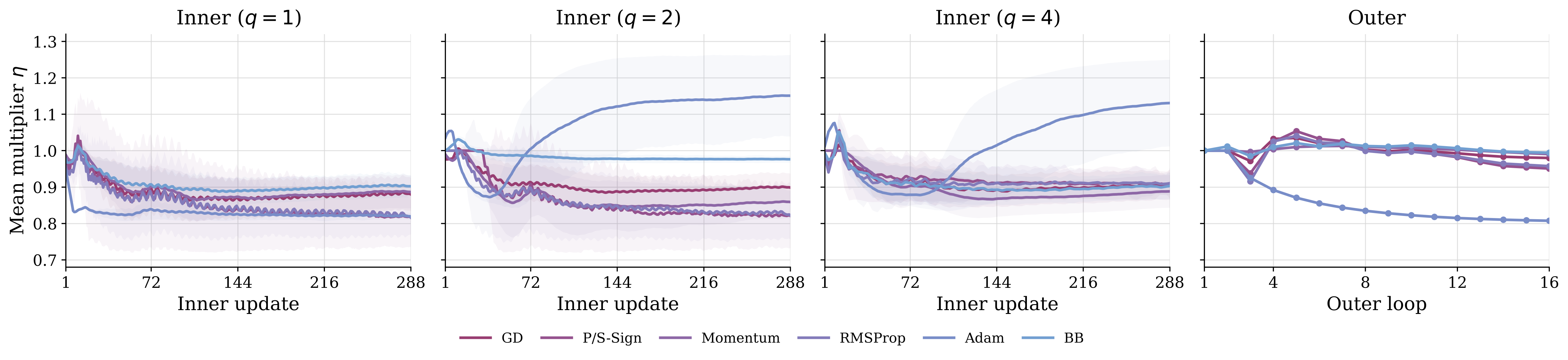}
    \caption{
    \textbf{Step-size trajectories across control granularities on Sudoku.}
    The first three panels show inner-loop control with $q\in\{1,2,4\}$, while the last shows outer-loop control; shading indicates variation across samples.
    Changing the refresh frequency alters the resulting schedule rather than merely reducing controller evaluations.
    }
    \label{fig:inner_outer_frequency_outer}
\end{figure}

The update frequency of TAPS introduces a natural accuracy--efficiency trade-off.
Refreshing the controller less frequently reduces the overhead of computing
\(\eta_k\), but also coarsens its tracking of the recurrent trajectory.
If this loss of adaptivity slows recurrent progress sufficiently, the saved
controller cost can be offset, or even outweighed, by the additional recurrent
computation required to reach the same task quality.

We study this trade-off from two complementary views.
TRM maintains a high-level latent state \(z_H\) across outer cycles and a
low-level latent state \(z_L\) within each cycle.
For inner adaptation, Algorithm~\ref{alg:unified-scheduler} is applied to every
\(z_L\) update, providing fine-grained control of the recurrent trajectory.
For outer adaptation, the controller is refreshed only at outer-cycle
boundaries: it measures the displacement of \(z_H\) and applies the resulting
\(\eta_k\) to the carried \(z_L\) residual, while the \(z_H\) update itself
remains a unit step.
Outer adaptation can therefore be viewed as a lower-frequency version of the
same update-scale control.

We further vary the controller refresh interval
\(q\in\{1,2,4\}\), where smaller \(q\) corresponds to more frequent adaptation
and larger \(q\) reduces controller evaluations at the cost of coarser
trajectory tracking.
Table~\ref{tab:inner-outer-maze-codesign} compares fine-grained inner-loop
control with lower-frequency outer-loop control on Maze, while
Figure~\ref{fig:frequency_pareto} reports the accuracy--efficiency trade-off
across refresh intervals on both Sudoku and Maze.

Overall, reducing the update frequency lowers controller overhead but does not
monotonically improve end-to-end efficiency.
Coarser control can reduce accuracy and slow recurrent progress enough to
offset the saved controller computation.
As shown in Table~\ref{tab:inner-outer-maze-codesign}, inner-loop scheduling
consistently achieves higher Maze accuracy than outer-loop scheduling, with
the gap increasing after co-design.
Figure~\ref{fig:frequency_pareto} further reveals a non-monotonic trade-off
across \(q\): less frequent updates save controller cost, but can move the
model away from the best accuracy--efficiency operating point.
Co-design improves this trade-off across refresh intervals, indicating that
TAPS remains effective over a range of update frequencies rather than relying
on a particular control granularity.

\subsection{Additional Experiments on Parallel Sampling}
\label{app:ptrm_noise}

\begin{table*}[t]
\centering

\begin{minipage}[t]{0.5\textwidth}
\vspace{0pt}
\centering

\caption{
Fixed-point inference efficiency on Sudoku.
}
\label{tab:fixed-point-efficiency}

\small
\setlength{\tabcolsep}{6pt}
\renewcommand{\arraystretch}{1.05}

\resizebox{\linewidth}{!}{%
\begin{tabular}{lcccc}
\toprule

\multirow{2}{*}{\textbf{Inference method}}
& \multicolumn{4}{c}{\textbf{Fixed-point updates}} \\

\cmidrule(lr){2-5}

& \textbf{50}
& \textbf{100}
& \textbf{500}
& \textbf{1000} \\

\midrule

\rowcolor{benchmarkbg}
\multicolumn{5}{l}{
\hspace{0.4em}\textit{Fixed-point target}
} \\

Accuracy
& 65.20 & 71.40 & 84.20 & 87.40 \\

\addlinespace[2pt]
\midrule
\addlinespace[2pt]

\rowcolor{benchmarkbg}
\multicolumn{5}{l}{
\hspace{0.4em}\textit{Adaptive controllers (ours)}
} \\

\textbf{TAPS (GD)}
& 144.0 & 162.0 & 226.4 & 264.7 \\

\textbf{TAPS (P/S-Sign)}
& 144.0 & \textbf{161.4} & 223.5 & 276.0 \\

\textbf{TAPS (Momentum)}
& 144.0 & 162.1 & 225.5 & 264.3 \\

\textbf{TAPS (RMSProp)}
& 144.0 & 162.5 & 229.7 & 257.3 \\

\textbf{TAPS (Adam)}
& 144.0 & 162.0 & 233.7 & \textbf{256.2} \\

\textbf{TAPS (BB)}
& 144.0 & 161.9 & \textbf{220.5} & 264.3 \\

\bottomrule
\end{tabular}%
}

\end{minipage}
\hspace{0.025\textwidth}
\begin{minipage}[t]{0.46\textwidth}
\vspace{0pt}
\centering

\caption{
Parallel sampling performance on Sudoku.
}
\label{tab:parallel-sampling}

\small
\setlength{\tabcolsep}{6pt}
\renewcommand{\arraystretch}{1.05}

\resizebox{\linewidth}{!}{%
\begin{tabular}{lcccc}
\toprule

\multirow{2}{*}{\textbf{Inference method}}
& \multicolumn{4}{c}{\textbf{Parallel-sampling budget}} \\

\cmidrule(lr){2-5}

& \textbf{10}
& \textbf{25}
& \textbf{50}
& \textbf{100} \\

\midrule

\rowcolor{benchmarkbg}
\multicolumn{5}{l}{
\hspace{0.4em}\textit{Fixed schedules}
} \\

Standard ($\eta=1.0$)
& 97.8 & 98.3 & 98.9 & 98.9 \\

\addlinespace[2pt]
\midrule
\addlinespace[2pt]

\rowcolor{benchmarkbg}
\multicolumn{5}{l}{
\hspace{0.4em}\textit{Adaptive controllers (ours)}
} \\

\textbf{TAPS (GD)}
& 97.9 & 98.5 & 98.6 & 98.7 \\

\textbf{TAPS (P/S-Sign)}
& 98.2 & 98.6 & 98.6 & 99.1 \\

\textbf{TAPS (Momentum)}
& 98.2 & \textbf{98.7} & 98.6 & 98.9 \\

\textbf{TAPS (RMSProp)}
& 98.2 & 98.6 & 98.9 & 99.1 \\

\textbf{TAPS (Adam)}
& \textbf{98.6} & \textbf{98.7} & \textbf{99.0} & 99.0 \\

\textbf{TAPS (BB)}
& 98.5 & 98.4 & 98.5 & \textbf{99.2} \\

\bottomrule
\end{tabular}%
}

\end{minipage}

\end{table*}

\begin{table*}[t]
\centering
\caption{
Accuracy (\%) of fixed and adaptive step-size schedules under Gaussian perturbations to the recurrent latent states, on 1{,}000 held-out Sudoku-Extreme examples ($H=16$, 25 parallel rollouts).
Noise with standard deviation $\sigma$ is injected into either the low-level state $z_L$ or the high-level state $z_H$, and each entry reports Best-Q accuracy over the 25 rollouts.
}
\label{tab:ptrm-noise-location}

\small
\setlength{\tabcolsep}{3.8pt}
\renewcommand{\arraystretch}{1.08}

\resizebox{\textwidth}{!}{
\begin{tabular}{lcccccccccc}
\toprule

\multirow{2}{*}{\textbf{Inference method}}
& \multicolumn{5}{c}{\textbf{Noise on $\boldsymbol{z_L}$}}
& \multicolumn{5}{c}{\textbf{Noise on $\boldsymbol{z_H}$}} \\

\cmidrule(lr){2-6}
\cmidrule(lr){7-11}

& $\boldsymbol{\sigma=0.2}$
& $\boldsymbol{\sigma=0.4}$
& $\boldsymbol{\sigma=0.6}$
& $\boldsymbol{\sigma=0.8}$
& $\boldsymbol{\sigma=1.0}$
& $\boldsymbol{\sigma=0.2}$
& $\boldsymbol{\sigma=0.4}$
& $\boldsymbol{\sigma=0.6}$
& $\boldsymbol{\sigma=0.8}$
& $\boldsymbol{\sigma=1.0}$ \\

\midrule

\rowcolor{benchmarkbg}
\multicolumn{11}{l}{
    \hspace{0.4em}\textit{Fixed schedule}
} \\

Standard ($\eta=1.0$)
& 97.33
& 97.60
& 97.37
& 97.60
& 97.87
& 97.70
& 96.20
& 91.63
& \textbf{87.27}
& 82.37 \\

\addlinespace[2pt]
\midrule
\addlinespace[2pt]

\rowcolor{benchmarkbg}
\multicolumn{11}{l}{
    \hspace{0.4em}\textit{Adaptive controllers (ours)}
} \\

\textbf{GD}
& 97.70
& 97.50
& 97.53
& 98.07
& 97.70
& 98.03
& 96.17
& 91.17
& 86.50
& 81.10 \\

\textbf{P/S-Sign}
& 97.70
& 97.70
& 97.90
& 97.80
& 97.73
& 97.60
& 96.17
& 90.63
& 85.33
& 80.77 \\

\textbf{Momentum}
& 97.50
& 97.50
& 97.57
& 97.73
& 97.70
& 97.90
& 96.07
& 91.53
& 86.33
& 82.20 \\

\textbf{RMSProp}
& \textbf{97.93}
& 97.73
& \textbf{98.13}
& 97.90
& 97.93
& 97.93
& 96.20
& 91.70
& 85.07
& 80.57 \\

\textbf{Adam}
& 97.87
& \textbf{98.07}
& 98.07
& \textbf{98.20}
& \textbf{98.07}
& \textbf{98.27}
& \textbf{96.33}
& \textbf{91.77}
& 85.53
& \textbf{82.43} \\

\textbf{BB}
& 97.83
& 97.73
& 97.83
& 97.83
& 98.00
& 97.90
& 96.30
& 91.57
& 85.47
& 80.93 \\

\bottomrule
\end{tabular}
}

\end{table*}

\textbf{Parallel-sampling performance.}
Table~\ref{tab:parallel-sampling} shows that adaptive stepping is most useful when the parallel-sampling budget is limited.
At \(K=10\), TAPS-Adam improves exact accuracy from \(97.8\%\) to \(98.6\%\), while the gain narrows as the baseline approaches saturation at larger budgets.
Nevertheless, the best TAPS variant improves over the unit-step baseline at every evaluated budget, reaching \(99.2\%\) with TAPS-BB at \(K=100\).
These results indicate that update-scale adaptation can improve the quality of individual recurrent rollouts, reducing the amount of parallel sampling needed for high exact accuracy.

\textbf{Sensitivity to hidden-state perturbations.} As shown in Table~\ref{tab:ptrm-noise-location}, our adaptive controllers remain robust when noise is applied to \(z_L\), achieving comparable or improved Best-Q accuracy across all noise levels. In contrast, perturbing \(z_H\) causes accuracy to decrease as the noise level increases, with adaptive methods occasionally underperforming the fixed unit-step baseline. This suggests that adaptive step-size control relies on sufficiently reliable high-level hidden-state updates to estimate an appropriate step magnitude and direction. Strong perturbations to \(z_H\) corrupt this signal and consequently reduce the benefit of adaptation.

\subsection{Additional Experiments on Recurrent Language Models}
\label{app:loop-llm}

We further evaluate TAPS on Ouro-2.6B to examine whether the behavior observed
on Ouro-1.4B extends to a larger recurrent language model.

As shown in Table~\ref{tab:ouro-task-specific}, all six TAPS variants improve
average accuracy over unit-step inference on Ouro-2.6B.
The best configuration increases the four-task average from \(71.35\%\) to
\(71.70\%\), with gains on ARC-C, HellaSwag, and GSM8K while largely preserving
MMLU performance.
In contrast, fixed rescaling with either \(\eta=0.8\) or \(\eta=1.2\) substantially
degrades performance.
Together with the Ouro-1.4B results in
Table~\ref{tab:ouro-huginn-task-specific}, these results suggest that the benefit
comes from adapting the update scale along the recurrent trajectory rather than
from uniformly changing its magnitude.

\begin{table*}[t]
\centering
\caption{
Accuracy (\%) of fixed and adaptive step-size schedules under language-model recurrence, on Ouro-2.6B.
\textit{Avg.} is the mean over the four benchmarks.
}
\label{tab:ouro-task-specific}

\small
\setlength{\tabcolsep}{7pt}
\renewcommand{\arraystretch}{1.05}

\begin{tabular}{lccccc}
\toprule

\multirow{2}{*}{\textbf{Inference method}}
& \multicolumn{5}{c}{\textbf{Ouro-2.6B}} \\

\cmidrule(lr){2-6}

& \textit{MMLU}
& \textit{ARC-C}
& \textit{HellaSwag}
& \textit{GSM8K}
& \textit{Avg.} \\

\midrule

\rowcolor{benchmarkbg}
\multicolumn{6}{l}{
    \hspace{0.4em}\textit{Fixed schedules}
} \\

Standard ($\eta=1.0$)
& 73.65 & 58.36 & 76.45 & 76.95 & 71.35 \\

Constant ($\eta=0.8$)
& 66.16 & 52.82 & 71.85 & 36.54 & 56.84 \\

Constant ($\eta=1.2$)
& 52.28 & 51.79 & 68.40 & 42.68 & 53.79 \\

\midrule

\rowcolor{benchmarkbg}
\multicolumn{6}{l}{
    \hspace{0.4em}\textit{Adaptive controllers (ours)}
} \\

\textbf{TAPS (GD)}
& 73.69 & 58.36 & 76.69 & 77.41 & 71.54 \\

\textbf{TAPS (P/S-Sign)}
& 73.66 & 58.62 & \textbf{76.74} & \textbf{77.79}
& \textbf{71.70} \\

\textbf{TAPS (Momentum)}
& 73.66 & \textbf{58.87} & 76.56 & 77.41 & 71.63 \\

\textbf{TAPS (RMSProp)}
& \textbf{73.71} & 58.79 & 76.52 & 77.41 & 71.61 \\

\textbf{TAPS (Adam)}
& \textbf{73.71} & 57.85 & 76.46 & \textbf{77.79} & 71.45 \\

\textbf{TAPS (BB)}
& 73.69 & 58.62 & 76.63 & 76.88 & 71.46 \\

\bottomrule
\end{tabular}

\end{table*}

\subsection{Additional Experiments on Efficient TAPS Implementation}

\begin{table}[t]
\centering
\caption{Latency and speedup of algebraic and operator-level acceleration for TAPS on Sudoku ($H=16$).
Latency is the cumulative CUDA time per example over 16 outer loops.
\emph{Alg.} reports the speedup from algebraic simplification relative to the original implementation,
\emph{Op.} the additional speedup from operator fusion relative to \emph{+ Alg.},
and \emph{Overall} the combined speedup relative to the original implementation.}
\label{tab:taps-acceleration-ablation}

\small
\setlength{\tabcolsep}{5.2pt}
\renewcommand{\arraystretch}{1.08}

\begin{tabular}{lcccccc}
\toprule

\multirow{2}{*}{\textbf{TAPS controller}}
& \multicolumn{3}{c}{\textbf{CUDA latency (ms/example)}}
& \multicolumn{3}{c}{\textbf{Speedup}} \\

\cmidrule(lr){2-4}
\cmidrule(lr){5-7}

& \textit{Original}
& \textit{+ Alg.}
& \textit{+ Alg. + Op.}
& \textit{Alg.}
& \textit{Op.}
& \textit{Overall} \\

\midrule

\rowcolor{benchmarkbg}
\multicolumn{7}{l}{
    \hspace{0.4em}\textit{Adaptive controllers (ours)}
} \\

\textbf{TAPS (GD)}
& 11.204 & 11.195 & 10.743
& 1.001$\times$ & 1.042$\times$ & 1.043$\times$ \\

\textbf{TAPS (P/S-Sign)}
& 11.219 & 11.029 & 10.768
& \textbf{1.017$\times$} & 1.024$\times$ & 1.042$\times$ \\

\textbf{TAPS (Momentum)}
& 11.236 & 11.207 & 10.745
& 1.003$\times$ & 1.043$\times$ & 1.046$\times$ \\

\textbf{TAPS (RMSProp)}
& 11.285 & 11.281 & 10.760
& 1.000$\times$ & \textbf{1.048$\times$}
& \textbf{1.049$\times$} \\

\textbf{TAPS (Adam)}
& 11.312 & 11.332 & 10.872
& 0.998$\times$ & 1.042$\times$ & 1.040$\times$ \\

\textbf{TAPS (BB)}
& 11.453 & 11.481 & 11.091
& 0.998$\times$ & 1.035$\times$ & 1.033$\times$ \\

\midrule

\textbf{Mean}
& 11.285 & 11.254 & 10.830
& 1.003$\times$ & 1.039$\times$ & 1.042$\times$ \\

\bottomrule
\end{tabular}

\end{table}

\textbf{Algorithm-level acceleration.}
The original implementation evaluates the progress and stability scores separately:
\[
\widehat{P}_k=
\frac{1}{4nd}
\left\|
\Delta_{k-1}+\Delta_k
\right\|_F^2,
\qquad
\widehat{S}_k=
\frac{1}{4nd}
\left\|
\Delta_k-\Delta_{k-1}
\right\|_F^2.
\]
This requires separately constructing the sum and difference of two consecutive residuals and then reducing both tensors. We accelerate this computation by expanding the two quadratic forms and jointly computing their shared residual energies and inner product. The resulting implementation reuses these statistics to recover both \(\widehat{P}_k\) and \(\widehat{S}_k\), thereby avoiding redundant tensor materialization and repeated reductions. Because this is an exact algebraic transformation, it preserves both the controller output and the recurrent trajectory. As shown in Table~\ref{tab:taps-acceleration-ablation}, this optimization introduces no systematic slowdown and achieves an average speedup of \(1.003\times\), with the largest improvement of \(1.017\times\) obtained by TAPS (P/S-Sign).

\textbf{Operator-level acceleration.}
After algebraic simplification, TAPS requires little additional arithmetic.
A low FLOP count, however, does not guarantee low latency: eager PyTorch decomposes each controller update into several small GPU kernels, repeatedly reading the same residuals and materializing temporary tensors.
Because the controller runs throughout recurrent inference, this overhead accumulates over hundreds of hidden-state updates.
Figure~\ref{fig:operator} contrasts this fragmented execution with our fused design.

\begin{figure}[h]
    \centering
    \includegraphics[width=0.9\linewidth]{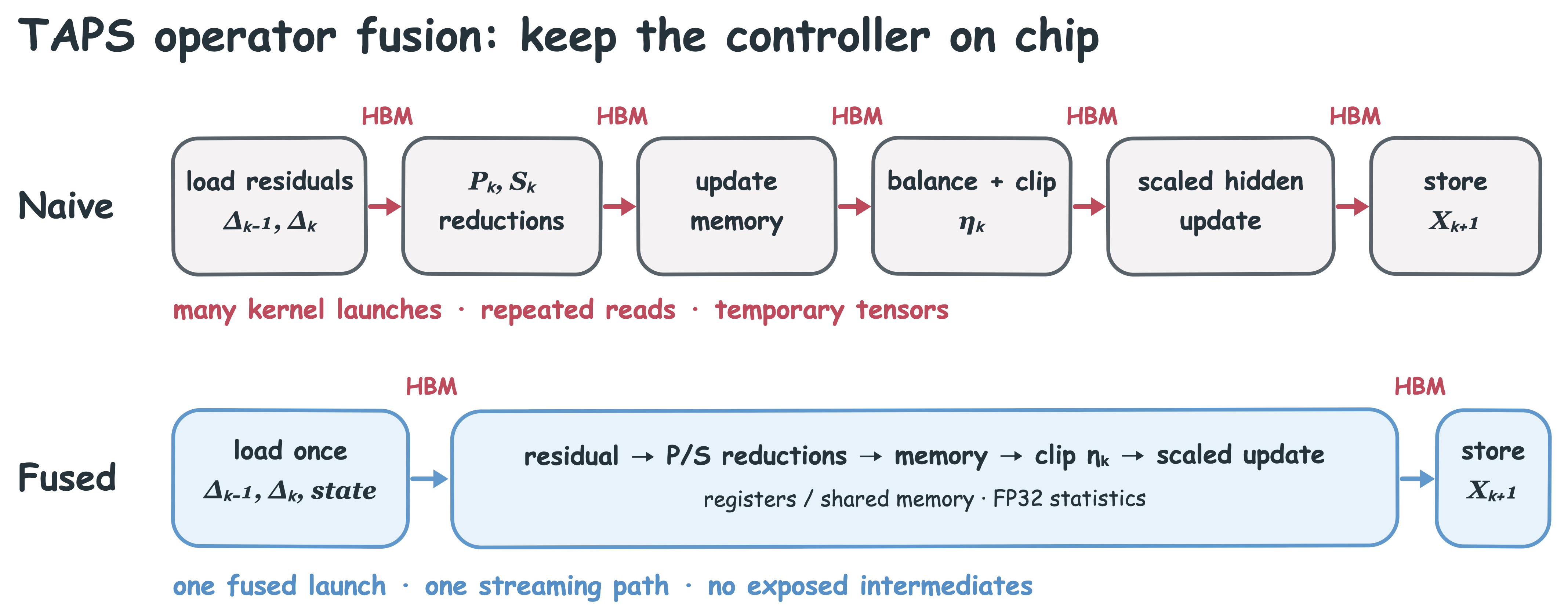}
    \caption{
    \textbf{Fusing the TAPS controller update.}
    The eager implementation executes each controller stage as a separate GPU kernel, whereas the fused operator keeps intermediate state on chip and completes the update through a single streaming path.
    }
    \label{fig:operator}
\end{figure}

The fused Triton/CUDA operator reads each residual pair once, computes the progress and stability statistics, updates the controller memory, determines and clips \(\eta_k\), and applies the scaled residual through a single execution path.
Reductions use FP32 for numerical stability, while hidden-state updates retain the model's original computation dtype.
This design reduces kernel launches, global-memory traffic, and intermediate tensor materialization without changing the controller equations.
The ablation in Table~\ref{tab:taps-acceleration-ablation} shows that fusion accounts for most of the practical runtime gain.
Combined with the algebraic simplification, it yields consistent \(1.033\times\)--\(1.049\times\) end-to-end speedups across TAPS variants, averaging \(1.042\times\).

\section{Inference-Time Step-Size Schedulers}
\label{sec:optimizer-pseudocode}

This section specifies the inference-time step-size schedulers used in our
experiments. All schedulers operate on the same pretrained recurrent model
and differ only in how they choose the step size at each recurrent update.

\paragraph{Fixed schedules.}
We first provide the algorithm of fixed schedules. We note that these baselines use no adaptive memory.

\begin{algorithm}[H]
\caption{Standard and constant relaxation factors}
\label{alg:fixed-scheduler}
\begin{algorithmic}[1]
\setlength{\itemsep}{0.3em}
\Statex \hspace{-\algorithmicindent} \textbf{Hyperparameter:} $\eta_{\mathrm{const}}$ (Standard uses $\eta_{\mathrm{const}}=1$)
\Statex \hspace{-\algorithmicindent} \textbf{State:} none
\Statex \hspace{-\algorithmicindent} \textbf{Update rule:}
\CenteredState{$\eta_k\gets\eta_{\mathrm{const}}$}
\Statex \hspace{-\algorithmicindent}{\textbf{return} $\eta_k$}
\end{algorithmic}
\end{algorithm}

\paragraph{Unified scheduler interface.}
We introduce a unified version of TAPS.
We represent a scheduler as a stateful rule
\begin{equation}
    (\eta_k,q_{k+1})
    =
    \mathcal A(\bDelta_k,q_k),
    \label{eq:scheduler-interface}
\end{equation}
where $q_k$ contains any history required by the scheduler. Fixed schedules
have no memory, while adaptive schedulers use recent recurrent updates to
determine $\eta_k$. Algorithm~\ref{alg:unified-scheduler} summarizes the
common inference procedure.

\begin{algorithm}[H]
\caption{Unified inference-time relaxation scheduling}
\label{alg:unified-scheduler}
\begin{algorithmic}[1]
\setlength{\itemsep}{0.3em}
\Statex \hspace{-\algorithmicindent}\textbf{Input:} $u$, initial state $\bX_0$, tied recurrent core $\Phi_\theta$, horizon $K$
\Statex \hspace{-\algorithmicindent}\textbf{Scheduler:} rule $\mathcal A$ with initial memory $q_0$ and warmup $K_{\rm warm}$
\For{$k=0,\ldots,K-1$}
    \State $\bDelta_k\gets\bDelta_\theta(\bX_k;u)$
    \State $(\eta_k,q_{k+1})\gets\mathcal A_k(\bDelta_k,q_k)$
    \State $\bX_{k+1}\gets\bX_k+\eta_k\bDelta_k$
\EndFor
\Statex \hspace{-\algorithmicindent}\textbf{Output:} $\bX_K$
\end{algorithmic}
\end{algorithm}

The following sections present implementations of TAPS with different rule $\mathcal{A}$.
\subsection{TAPS with Adjacent-update Scores.}
For $k\geq1$, GD, P/S-Sign, Momentum, and RMSProp use
\begin{equation*}
    \widehat P_k
    =\frac{1}{4nd}\normF{\bDelta_{k-1}+\bDelta_k}^{2},
    \qquad
    \widehat S_k
    =\frac{1}{4nd}\normF{\bDelta_k-\bDelta_{k-1}}^{2}.
    \label{eq:adjacent-ps}
\end{equation*}
$\widehat P_k$ measures the part that persists across two updates, whereas $\widehat S_k$ measures the part that alternates between them. During the two-step warmup, history-dependent scores are not evaluated: we set $\eta_k=1$ and only store the states required by subsequent scheduler updates.

\begin{algorithm}[H]
\caption{TAPS (GD)}
\label{alg:gd-scheduler}
\begin{algorithmic}[1]
\setlength{\itemsep}{0.3em}
\Statex \hspace{-\algorithmicindent} \textbf{Hyperparameters:} $\gamma,\rho,\eta_{\min},\eta_{\max},\epsilon_{\rm num},K_{\rm warm}$
\Statex \hspace{-\algorithmicindent} \textbf{State:} previous update $\bDelta_{k-1}$
\Statex \hspace{-\algorithmicindent} \textbf{Update rule:}
\CenteredState{$\widehat P_k\gets\dfrac{1}{4nd}
    \normF{\bDelta_{k-1}+\bDelta_k}^{2}$}
\CenteredState{$\widehat S_k\gets\dfrac{1}{4nd}
    \normF{\bDelta_k-\bDelta_{k-1}}^{2}$}
\CenteredState{$\widehat B_k\gets
    \dfrac{\widehat P_k-\gamma\widehat S_k}
    {\widehat P_k+\gamma\widehat S_k+\epsilon_{\rm num}}$}
\CenteredState{$\widetilde\eta_k\gets\clip
    (1+\rho\widehat B_k,\eta_{\min},\eta_{\max})$}
\CenteredState{$\eta_k\gets1$ if $k<K_{\rm warm}$; otherwise $\eta_k\gets\widetilde\eta_k$}
\Statex \hspace{-\algorithmicindent}{\textbf{Store} $\bDelta_k$ and \textbf{return} $\eta_k$}
\end{algorithmic}
\end{algorithm}

\begin{algorithm}[H]
\caption{TAPS (P/S-Sign)}
\label{alg:ps-sign-scheduler}
\begin{algorithmic}[1]
\setlength{\itemsep}{0.3em}
\Statex \hspace{-\algorithmicindent} \textbf{Hyperparameters:} $\gamma,\rho,\eta_{\min},\eta_{\max},K_{\rm warm}$
\Statex \hspace{-\algorithmicindent} \textbf{State:} previous update $\bDelta_{k-1}$
\Statex \hspace{-\algorithmicindent} \textbf{Update rule:}
\CenteredState{$\widehat P_k\gets\dfrac{1}{4nd}
    \normF{\bDelta_{k-1}+\bDelta_k}^{2}$}
\CenteredState{$\widehat S_k\gets\dfrac{1}{4nd}
    \normF{\bDelta_k-\bDelta_{k-1}}^{2}$}
\CenteredState{$\sigma_k\gets
    \operatorname{sign}(\widehat P_k-\gamma\widehat S_k)$}
\CenteredState{$\widetilde\eta_k\gets\clip
    (1+\rho\sigma_k,\eta_{\min},\eta_{\max})$}
\CenteredState{$\eta_k\gets1$ if $k<K_{\rm warm}$; otherwise $\eta_k\gets\widetilde\eta_k$}
\Statex \hspace{-\algorithmicindent}{\textbf{Store} $\bDelta_k$ and \textbf{return} $\eta_k$}
\end{algorithmic}
\end{algorithm}

\begin{algorithm}[H]
\caption{TAPS (Momentum)}
\label{alg:momentum-scheduler}
\begin{algorithmic}[1]
\setlength{\itemsep}{0.3em}
\Statex \hspace{-\algorithmicindent} \textbf{Hyperparameters:} $\beta,\gamma,\rho,\eta_{\min},\eta_{\max},\epsilon_{\rm num},K_{\rm warm}$
\Statex \hspace{-\algorithmicindent} \textbf{State:} previous update $\bDelta_{k-1}$, scalar momentum $\mu_{k-1}$ with $\mu_0=0$
\Statex \hspace{-\algorithmicindent} \textbf{Update rule:}
\CenteredState{$\widehat P_k\gets\dfrac{1}{4nd}
    \normF{\bDelta_{k-1}+\bDelta_k}^{2}$}
\CenteredState{$\widehat S_k\gets\dfrac{1}{4nd}
    \normF{\bDelta_k-\bDelta_{k-1}}^{2}$}
\CenteredState{$\widehat B_k\gets
    \dfrac{\widehat P_k-\gamma\widehat S_k}
    {\widehat P_k+\gamma\widehat S_k+\epsilon_{\rm num}}$}
\CenteredState{$\mu_k\gets\beta\mu_{k-1}+(1-\beta)\widehat B_k$}
\CenteredState{$\widetilde\eta_k\gets\clip
    (1+\rho\mu_k,\eta_{\min},\eta_{\max})$}
\CenteredState{$\eta_k\gets1$ if $k<K_{\rm warm}$; otherwise $\eta_k\gets\widetilde\eta_k$}
\Statex \hspace{-\algorithmicindent}{\textbf{Store} $(\bDelta_k,\mu_k)$ and \textbf{return} $\eta_k$}
\end{algorithmic}
\end{algorithm}

\begin{algorithm}[H]
\caption{TAPS (RMSProp)}
\label{alg:rmsprop-scheduler}
\begin{algorithmic}[1]
\setlength{\itemsep}{0.3em}
\Statex \hspace{-\algorithmicindent} \textbf{Hyperparameters:} $\beta,\gamma,\rho,\eta_{\min},\eta_{\max},\epsilon_{\rm num},K_{\rm warm}$
\Statex \hspace{-\algorithmicindent} \textbf{State:} previous update $\bDelta_{k-1}$, scalar second moment $r_{k-1}$ with $r_0=0$
\Statex \hspace{-\algorithmicindent} \textbf{Update rule:}
\CenteredState{$\widehat P_k\gets\dfrac{1}{4nd}
    \normF{\bDelta_{k-1}+\bDelta_k}^{2}$}
\CenteredState{$\widehat S_k\gets\dfrac{1}{4nd}
    \normF{\bDelta_k-\bDelta_{k-1}}^{2}$}
\CenteredState{$\widehat B_k\gets
    \dfrac{\widehat P_k-\gamma\widehat S_k}
    {\widehat P_k+\gamma\widehat S_k+\epsilon_{\rm num}}$}
\CenteredState{$r_k\gets\beta r_{k-1}+(1-\beta)\widehat B_k^2$;\quad
    $\widehat r_k\gets\dfrac{r_k}{1-\beta^k}$}
\CenteredState{$\widetilde\eta_k\gets\clip
    \left(1+\rho\dfrac{\widehat B_k}{\sqrt{\widehat r_k+\epsilon_{\rm num}}},
    \eta_{\min},\eta_{\max}\right)$}
\CenteredState{$\eta_k\gets1$ if $k<K_{\rm warm}$; otherwise $\eta_k\gets\widetilde\eta_k$}
\Statex \hspace{-\algorithmicindent}{\textbf{Store} $(\bDelta_k,r_k)$ and \textbf{return} $\eta_k$}
\end{algorithmic}
\end{algorithm}

\subsection{TAPS with Moment and Curvature Scores}
The following schedulers define progress and stability from temporal moments
or local directional curvature.

\begin{algorithm}[H]
\caption{TAPS (Adam)}
\label{alg:adam-scheduler}
\begin{algorithmic}[1]
\setlength{\itemsep}{0.3em}
\Statex \hspace{-\algorithmicindent} \textbf{Hyperparameters:} $\beta,\gamma,\rho,\eta_{\min},\eta_{\max},\epsilon_{\rm num},K_{\rm warm}$
\Statex \hspace{-\algorithmicindent} \textbf{State:} update mean $\bM_{k-1}$, energy mean $v_{k-1}$, initialized by $\bM_{-1}=\bzero$, $v_{-1}=0$
\Statex \hspace{-\algorithmicindent} \textbf{Update rule:}
\CenteredState{$\bM_k\gets
    \dfrac{\beta-\beta^{k+1}}{1-\beta^{k+1}}\bM_{k-1}
    +\dfrac{1-\beta}{1-\beta^{k+1}}\bDelta_k$}
\CenteredState{$v_k\gets
    \dfrac{\beta-\beta^{k+1}}{1-\beta^{k+1}}v_{k-1}
    +\dfrac{1-\beta}{nd(1-\beta^{k+1})}\normF{\bDelta_k}^{2}$}
\CenteredState{$\widehat P_k\gets\dfrac{1}{nd}\normF{\bM_k}^{2}$;\quad
    $\widehat S_k\gets v_k-\widehat P_k$}
\CenteredState{$\widehat B_k\gets
    \dfrac{\widehat P_k-\gamma\widehat S_k}
    {\widehat P_k+\gamma\widehat S_k+\epsilon_{\rm num}}$}
\CenteredState{$\widetilde\eta_k\gets\clip
    (1+\rho\widehat B_k,\eta_{\min},\eta_{\max})$}
\CenteredState{$\eta_k\gets1$ if $k<K_{\rm warm}$; otherwise $\eta_k\gets\widetilde\eta_k$}
\Statex \hspace{-\algorithmicindent}{\textbf{Store} $(\bM_k,v_k)$ and \textbf{return} $\eta_k$}
\end{algorithmic}
\end{algorithm}

\begin{algorithm}[H]
\caption{TAPS (BB)}
\label{alg:bb-scheduler}
\begin{algorithmic}[1]
\setlength{\itemsep}{0.3em}
\Statex \hspace{-\algorithmicindent} \textbf{Hyperparameters:} $\gamma,\rho,\eta_{\min},\eta_{\max},\epsilon_{\rm num},K_{\rm warm}$
\Statex \hspace{-\algorithmicindent} \textbf{State:} previous update $\bDelta_{k-1}$ and displacement
                         $\bs_{k-1}=\bX_k-\bX_{k-1}$
\Statex \hspace{-\algorithmicindent} \textbf{Update rule:}
\CenteredState{$\bD_{k-1}\gets\bDelta_k-\bDelta_{k-1}$}
\CenteredState{$\kappa_k\gets
    \dfrac{|\innerF{\bs_{k-1}}{\bD_{k-1}}|}
    {\normF{\bs_{k-1}}^2+\epsilon_{\rm num}}$}
\CenteredState{$\widehat P_k\gets\dfrac{1}{nd}\normF{\bDelta_k}^{2}$;\quad
    $\widehat S_k\gets\kappa_k^2\widehat P_k$}
\CenteredState{$\widehat B_k\gets
    \dfrac{\widehat P_k-\gamma\widehat S_k}
    {\widehat P_k+\gamma\widehat S_k+\epsilon_{\rm num}}$}
\CenteredState{$\widetilde\eta_k\gets\clip
    (1+\rho\widehat B_k,\eta_{\min},\eta_{\max})$}
\CenteredState{$\eta_k\gets1$ if $k<K_{\rm warm}$; otherwise $\eta_k\gets\widetilde\eta_k$}
\CenteredState{$\bs_k\gets\eta_k\bDelta_k$}
\Statex \hspace{-\algorithmicindent}{\textbf{Store} $(\bDelta_k,\bs_k)$ and \textbf{return} $\eta_k$}
\end{algorithmic}
\end{algorithm}

\end{document}